\theoremstyle{plain}
\newtheorem{theorem}{Theorem}[section]
\theoremstyle{definition}
\newtheorem{definition}[theorem]{Definition}
\theoremstyle{remark}
\newcommand{\R}{\mathbb{R}}
\newcommand{\bmu}{{\boldsymbol \mu}}
\newcommand{\N}{\mathcal{N}}
\newcommand{\GP}{\mathcal{GP}}
\newcommand{\eqdef}{\overset{\text{\tiny{def}}}{=}}
\newcommand{\dist}{\text{proto}}
\newcommand{\xmark}{\ding{55}}%
\begin{document}

\twocolumn[

\icmltitlerunning{Scalable First-Order Bayesian Optimization
via Structured Automatic Differentiation}

\icmltitle{Scalable First-Order Bayesian Optimization \\ 
via Structured Automatic Differentiation}



\icmlsetsymbol{equal}{*}

\begin{icmlauthorlist}
\icmlauthor{Sebastian Ament}{cornell} 
\qquad
\qquad
\icmlauthor{Carla Gomes}{cornell}
\end{icmlauthorlist}

\icmlaffiliation{cornell}{Department of Computer Science, Cornell University, Ithaca, NY, 14850, USA}
\icmlcorrespondingauthor{
	\href{https://sebastianament.github.io}{Sebastian Ament}
	}{\texttt{ament@cs.cornell.edu}}

\icmlkeywords{Machine Learning, ICML}

\vskip 0.3in
]

\printAffiliationsAndNotice{}  


\begin{abstract}
Bayesian Optimization (BO) has shown great promise for the global optimization of functions that are expensive to evaluate, but
despite many successes,
standard approaches 
can struggle in high dimensions.
To improve the performance of BO, 
prior work suggested incorporating gradient information into a Gaussian process surrogate of the objective, giving rise to kernel matrices of size $nd \times nd$ for $n$ observations in $d$ dimensions.
Na\"ively multiplying with (resp. inverting) these matrices requires $\mathcal{O}(n^2d^2)$ (resp. $\mathcal{O}(n^3d^3$)) operations, which becomes infeasible for moderate dimensions and sample sizes.
Here, we observe that a wide range of kernels gives rise to structured matrices,
enabling an {\it exact} $\mathcal{O}(n^2d)$ matrix-vector multiply for gradient observations
and $\mathcal{O}(n^2d^2)$ for Hessian observations.
Beyond canonical kernel classes, 
we derive a programmatic approach to leveraging this type of structure for transformations and combinations of the discussed kernel classes, which constitutes a structure-aware automatic differentiation algorithm. 
Our methods apply to 
virtually all canonical kernels and automatically extend to
complex kernels, like the neural network, radial basis function network, and spectral mixture kernels without any additional derivations,
enabling flexible, problem-dependent modeling
while scaling first-order BO to high $d$.
\end{abstract}

\section{Introduction}

Bayesian Optimization (BO) has demonstrated tremendous promise for the global optimization of functions, in particular those that are expensive to evaluate
\citep{shahriari2016bo, frazier2018tutorial}.
Instantiations of BO can be found in Active Learning (AL) \citep{settles2009active, tuia2011survey, fu2013survey}, the optimal design of experiments \citep{chaloner1995bayesian, foster2019design, zheng2020sequential}, and Optimal Learning \citep{powell2012optimal}.
Its applications range widely from the optimization of hyper-parameters of complex machine learning models \citep{snoek2012practical} 
to the sciences and engineering as \citet{attia2020closed}, who optimized charging protocols to maximize battery life.
\citet{li2018hyperband} reported that random search with only twice as many samples can outperform standard BO methods on a certain hyper-parameter optimization task.
This lead \citet{ahmed2016we} to advocate for first-order BO (FOBO) as a critical improvement, a call that recently received theoretical heft due to \citet{shekhar2021significance}, 
who
proved that
FOBO achieves an {\it exponential} improvement on
 the expected regret of standard BO for multi-armed bandit problems
 as a function of the number of observations $n$ and dimensionality $d$ of the input.

At the same time, differentiable programming and automatic differentiation (AD), which enable the calculation of gradients through complex numerical programs,
have become an integral part of machine learning research \citep{innes2017pl, wang2018backpropagation, baydin2018ad}
and practice, perhaps best illustrated by PyTorch \citep{paszke2019pytorch} and Tensorflow \citep{tensorflow2015whitepaper}, both of which include AD engines.
Certainly, AD has powered an increasing pace of model development by automating the error-prone writing of derivative code and is thus a natural complement to FOBO,
if only to compute the gradients of the objective.

On a high level, most BO approaches build a surrogate model of an objective with a few potentially noisy observations and make informed choices about further queries based on predictive values and uncertainties of the surrogate.
In principle, any functional form could be employed as a surrogate, and indeed
\citet{wang2014new}, \citet{snoek2015scalable}, and \citet{gal2017deep} use deep neural networks for AL and BO.
However, Gaussian Processes (GP) are currently the most commonly used models for research and applications of BO because they work well with little data and permit closed-form posterior inference.
Fortunately,
GPs are closed under differentiation 
with benign assumptions, see Section~\ref{sec:gp_differentiability},
and maintain their analytical properties when conditioned on gradient information \citep{solak2003derivative}.

Nonetheless, na\"ively incorporating gradients leads to kernel matrices of size $nd \times nd$, for $n$ observations in $d$ dimensions,
which restricts the possible problem sizes and dimensions, a problem that needs to be overcome to make FOBO applicable to a wide array of problems.
Further, as the performance of GPs chiefly depends on their covariance kernel,
it is important to give researchers and practitioners flexibility in this choice.
Herein, it is our primary goal to enable {\it scalabe inference} for GPs 
in the context of FOBO,
while maintaining {\it modeling flexibility} via matrix-structure-aware AD.

\paragraph{Contributions}

We 1) derive analytical block-data-sparse structures for a large class of gradient kernel matrices, allowing for an exact $\mathcal{O}(n^2d)$ multiply in Section~\ref{sec:methods},
2) propose an AD framework that programmatically computes the data-sparse block structures for transformations, and algebraic combinations of kernels
and make \href{https://github.com/SebastianAment/CovarianceFunctions.jl}{our implementation} publicly available\footnote{\href{https://github.com/SebastianAment/CovarianceFunctions.jl}{\scriptsize \bf \texttt{github.com/SebastianAment/CovarianceFunctions.jl}}}.
In Section~\ref{sec:hessstructure}, 
we further 3) derive analogous structures for kernel matrices that arise from conditioning on Hessian information, 
reducing the complexity from $\mathcal{O}(n^2d^4)$ for the na\"ive approach to 
$\mathcal{O}(n^2d^2)$,
4) provide numerical experiments that demonstrate the improved scaling and
delineate the problem sizes for which the proposed methods are applicable in Section~\ref{sec:scaling},
5) compare against existing techniques in Section~\ref{sec:comparison_existing} and 
6) use the proposed methods for Bayesian Optimization in Section~\ref{sec:test}.

\section{Related Work}
\paragraph{Gaussian Processes}
Inference for GPs has traditionally been based on matrix factorizations, but recently,
methods based on iterative solvers have been developed, which can scale up to a million data points without approximations \citep{wang2019exact}
by leveraging the parallelism of modern hardware \citep{dong2017scalable, gardner2018gpytorch}.
Extending the approximate matrix-vector multiplication algorithms of \citet{wilson2015kernel}
and \citet{gardner2018product}, 
\citet{eriksson2018scaling} proposed an approximate method for GPs with derivative information which scales quasi-linearly in $n$ for separable product kernels whose constituents are stationary. 
\citet{de2021high} proposed an elegant direct method for GPs with derivatives that scales linearly in the dimensionality but sextically -- $\mathcal{O}(n^6 + n^2d)$ -- with the number of data points and also derive an efficient multiply for dot-product and isotropic kernels whose inputs can be scaled by a diagonal matrix.
\citet{frazier2017gradients} used GPs with gradients for BO 
and proposed keeping only a single directional derivative to reduce the computational cost.
\citet{padidar2021scaling} proposed a similar strategy, retaining only relevant directional derivatives, to scale a variational inference scheme for GPs with derivatives.
Notably, incorporating gradient information into GPs is not only useful for BO:
\citet{solak2003derivative} put forward the integration of gradient information for GP models of dynamical systems,
\citet{riihimaki2010gaussian} used ``virtual'' derivative observations to include monotonicity constraints into GPs, and
\citet{solin2018magnetic} employed the derivatives of a GP to model curl-free magnetic fields and their physical constraints.

\paragraph{Automatic Differentiation}
To disambiguate several sometimes conflated terms,
we quote \citet{baydin2018ad}, who defined AD as ``a specific family of techniques that computes derivatives through accumulation of values during code execution to generate numerical derivative evaluations rather than derivative expressions''.
It enables the computation of derivatives up to machine precision 
while maintaining the speed of numerical operations.
Practical implementations of AD include forward-mode differentiation techniques based on operator overloading \citep{revels2016fd},  the $\partial P$ system of \citet{innes2018zygote},
which is able to generate compiled derivative code of differentiable components of the Julia language,
as well as the reverse-mode differentiation technologies of 
PyTorch \citep{paszke2019pytorch} and Tensorflow \citep{tensorflow2015whitepaper}.
\citet{maclaurin2015gradient} put forward an algorithm for computing gradients of models w.r.t. their {\it hyper-parameters} using reverse-mode auto-differentiation,
enabling the use of FOBO to optimize a model's generalization performance.
Among others, \citet{verma1998structured} explored the exploitation of structure, primarily sparsity, in the automatic computation of Jacobian and Hessian matrices. However, the existing work is not directly applicable here, since it does not treat the more general {\it data-sparse} structures of Section~\ref{sec:methods}.
For a review of automatic differentiation (AD) techniques, see \cite{griewank2008evaluating}.

\paragraph{Bayesian Optimization}
Bayesian Optimization (BO) has been applied to a diverse set of problems,
and of particular interest to the machine learning community is the optimization of hyper-parameters of complex models \citep{klein2017fast}.
Spurring much interest in BO, \citet{snoek2012practical} demonstrated that BO is an effective tool for the optimization of hyper-parameters of deep neural networks.
\citet{hennig2012entropy} proposed entropy search for global optimization,
a technique that employs GPs to compute a distribution over the potential optimum of a function.
\citet{wang2013bayesian} proposed efficient BO with random embeddings which scales to very high-dimensional problems by exploiting lower-dimensional structures.
\citet{krause2018boadditive} assumed an additive structure to scale BO to high dimensions.
\citet{eriksson2018scaling} used their fast approximate inference technique
for FOBO in combination with an active subspaces method \citep{constantine2014active} in order to reduce the dimensionality of the optimization problem and to speed up convergence.
\citet{martinez2018practical} enabled BO in the presence of outliers by employing a heavy-tailed likelihood distribution.
\citet{malkomes2018automating} used BO in model space to choose surrogate models for use in a primary BO loop.
\citet{frazier2019twostep} presented a 
two-step
 lookahead method for BO.
 \citet{eriksson2019turbo} put forth \texttt{TuRBO}, leveraging a set of local models for the global optimization of high-dimensional functions.
BO is also applied to hierarchical reinforcement learning \citep{brochu2010tutorial, prabuchandran2021boandrl}.
Existing BO libraries include 
\href{https://github.com/dragonfly/dragonfly}{Dragonfly} \citep{kandasamy2020dragonfly},
\href{https://github.com/rmcantin/bayesopt/}{BayesOpt} \citep{martinez2014bayesopt},
and \href{https://botorch.org}{BoTorch} \citep{balandat2020botorch}.
For a review of BO, see \citep{frazier2018tutorial}.

\section{Methods}
\label{sec:methods}

\subsection{Preliminaries}

We first provide definitions and set up notation 
and central quantities for the rest of the paper.
\begin{definition}
A random function $f$ is a Gaussian process with a mean $\mu$ and covariance function $k$ if and only if all of its finite-dimensional marginal distributions are multivariate Gaussian distributions.
In particular, $f$ is a Gaussian process if and only if
for any finite set of inputs $\{\bold x_i\}$,
\[
\bold f
\sim \N\left(
\bmu, 
\bold K
\right),
\]
where $f_i = f(\bold x_i)$, 
$\mu_{i} = \mu(\bold x_i)$ and $K_{ij} = k(\bold x_i, \bold x_j)$.
In this case, we write $f \sim \GP(\mu, k)$.
\end{definition}

When defining kernel functions, $\bold x$ and $\bold y$ will denote the first and second inputs,
$\bold r = \bold x - \bold y$ their difference,
$\bold I_d$ the $d$-dimensional identity matrix, 
and $\bold 1_d$ the all-ones vector of length $d$.
The gradient and Jacobian operators with respect to $\bold x$ will be denoted by
$\nabla_{\bold x}$ and $\bold J_{\bold x}$, respectively.

\paragraph{The $\bold G$ Operator}
 The focus of the present work is the matrix-valued operator $\bold G = \nabla_{\bold x} \nabla_{\bold y}^\top$ that acts on kernel functions $k(\bold x, \bold y)$ and whose entries are $G_{ij} = \partial_{x_i} \partial_{y_j}$.
We will show that $\bold G[k]$ is highly structured and data-sparse for a vast space of kernel functions
and present an automatic structure-aware algorithm for the computation of~$\bold G$.  
The kernel matrix $\bold K^{\nabla} = \bold G[k](\bold X)$ that arises from the evaluation of $\bold G[k]$ on the data $\bold X = [\bold x_1 \hdots \bold x_n]$
can be seen as a block matrix whose $(i, j)$th block is $\bold K^{\nabla}_{ij} = \bold G[k](\bold x_i, \bold x_j)$.
For isotropic and dot-product kernels, \citet{de2021high} discovered that $\bold K^\nabla$ has the structure
$\bold K^{\nabla} = k'(\bold X) \otimes \bold I_d + [\text{rank-$n^2$ matrix}]$,
which allows a linear-in-$d$ direct inversion, though the resulting $\mathcal{O}(n^6)$-scaling only applies to the low-data regime. 
Rather than deriving similar global structure, 
we focus on efficient structure for the blocks $\bold G[k](\bold x_i, \bold x_j)$, which is more readily amenable to a fully lazy implementation with $\mathcal{O}(1)$ memory complexity, and the synthesis of several derivative orders,
see Sec. \ref{sec:combiningorders} for details.
Last, we stress that our goal here is to focus on the {\it subset} of transformations that arise in most kernel functions, and {\it not} the derivation of a fully general structured AD engine for the computation of the $\bold G$ operator.

\subsection{Gradient Kernel Structure}
In this section, we derive novel structured representations of $\bold G[k]$ for a large class of kernels $k$.
The only similar previously known structures are for isotropic and dot-product kernels derived by \citet{de2021high}.

\label{sec:gradstructure}
\paragraph{Input Types}

The majority of canonical covariance kernels can be written as 
\[
\begin{aligned}
k(\bold x, \bold y) = f(\dist(\bold x, \bold y)),
\end{aligned}
\]
where
$ \dist(\bold x, \bold y) = (\bold r \cdot \bold r), \ (\bold c \cdot \bold r), \ \text{or} \ (\bold x \cdot \bold y)$,
$f$ is a scalar-valued function, and $\bold c \in \R^d$.
The first two types make up most of commonly used stationary covariance functions, 
while the last constitutes the basis of many popular non-stationary kernels.
We call the choice of \dist \ isotropic, stationary linear functional, and dot product, respectively.
First, we note that $\bold G[\dist]$ is simple for all three choices:
\[
\begin{aligned}
\bold G [\bold r \cdot \bold r] = -\bold I_d, \ \
\bold G [\bold c \cdot \bold r] = \bold 0_{d \times d}, \ \ \text{and} \ \
\bold G [\bold x \cdot \bold y] = \bold I_d.
\end{aligned}
\]
 
Kernels with the first and third input type are ubiquitous and include the 
exponentiated quadratic, rational quadratic, Mat\'ern, and polynomial kernels. 
An important example of the second type is the cosine kernel, 
which has been used to approximate stationary kernels \citep{rahimi2007random, lazaro2010sparse, gal2015improving}
and is also a part of the spectral mixture kernel \citep{wilson2013sm}.
In the following, we systematically treat most of the kernels and transformations 
in \cite{rasmussen2005gpml} to greatly expand the class of kernels for which structured representations are available.

\paragraph{A Chain Rule}
Many kernels can be expressed as $k = f \circ g$ where $g$ is scalar-valued.
For these types of kernels, we have
\[
\bold G[f \circ g] = (f' \circ g) \ \bold G[g] + (f'' \circ g) \ \nabla_{\bold x}[g] \nabla_{\bold y}[g]^\top.
\]
That is, $\bold G[f \circ g]$ is a rank-one correction to $\bold G[g]$.
If $\bold G[g]$ is structured with $\mathcal{O}(d)$ data, $\bold G[f \circ g]$ inherits this property.
As an immediate consequence,
$\bold G[k]$ permits a matrix-vector multiply in $\mathcal{O}(d)$ time for all isotropic, stationary, and dot-product kernels that fall under the categories outlined above.
However, there are combinations and transformations of these base kernels that give rise to
more complex kernels and enable more flexible, problem-dependent modeling. 

\paragraph{Sums and Products}
First, covariance kernels are closed under addition and multiplication.
If all summands or coefficients are of the the same input-type, the sum kernel has the same input type since $(f \circ \dist) + (g \circ \dist) = (f + g) \circ \dist$ and 
similarly for products,
so that no special treatment is necessary beside the chain rule above.
An interesting case occurs when we combine kernels of different input types or more complex composite kernels.
For $k= \sum_i^r k_i$, we trivially have 
$\bold G[k] = \sum_i^r  \bold G[k_i]$,
and so the complexity of multiplying with $\bold G[k]$ is $\mathcal{O}(dr)$.
For product kernels $
k(\bold x, \bold y) = g(\bold x, \bold y) h(\bold x, \bold y)$, we have
\[
\bold G[k] =
\bold G[g] h
+ g \bold G[h]
+ \nabla_{\bold x} [g] \ \nabla_{\bold y} [h]^\top
+ \nabla_{\bold x} [h] \ \nabla_{\bold y} [g]^\top,
\]
which is a rank-two correction to the sum of the scaled constituent gradient kernels elements -- $\bold G g$ and $ \bold G h$ -- and therefore only adds $\mathcal{O}(d)$ operations to the multiplication with the constituent elements.
In general, the application of $\bold G$ to a product of $r$ kernels $k = \prod_i^r k_i$ gives rise to a 
rank-$r$
correction to the sum of the constituent gradient kernels:
\begin{equation}
\label{eq:product}
\bold G[k] =
 \sum_{i=1}^r 
\bold G[k_i] p_i
+ \bold J_{\bold x} [\bold k]^\top \ \bold P \ \bold J_{\bold y}[\bold k],
\end{equation}
where $p_i = \prod_{j\neq i} k_i$ and $P_{ij} = \prod_{t \neq i, j} k_t$,
whose formation would generally be $\mathcal{O}(r^2)$.
However, if $k_i \neq 0$ for all~$i$, we have
$p_i = k / k_i$ and 
$\bold P = k \ \bold D_{\bold k}^{-1} \ (\bold 1_r \bold 1_r^\top - \bold I_r) \ \bold D_{\bold k}^{-1}$,
where $\bold k = [k_1, \hdots, k_r]$, and $\bold D_{\bold k}$ is the diagonal matrix with $\bold k$ on the diagonal.
A matrix-vector multiplication with \eqref{eq:product} can thus be computed in $\mathcal{O}(dr)$.
If $r \sim d$, the expression is generally not data-sparse unless the Jacobians are,
which is the case for the following special type of kernel product.

\paragraph{Direct Sums and Products}

Given a set of $d$ kernels $\{k_i\}$ each of which acts on a different input dimension, we can define their direct product (resp. sum) as
$k(\bold x, \bold y) = \prod_i k_i(x_i, y_i)$ (resp. $ \sum_i k_i(x_i, y_i)$), where $x_i$ corresponds to the dimension on which $k_i$ acts.
This separable structure gives rise to 
sparse differential operators $\bold G k$ and $\bold J_{\bold x} k$ that are zero except for
\[
[\bold G k_i]_{ii} = [\partial_{x_i}\partial_{y_i} k_i] \prod_{j\neq i} k_j,
\ \ \
\text{and}
\ \ \
[\bold J_{\bold x}\bold k]_{ii} = \partial_{x_i} k_i.
\]
For direct sums, $\bold G k$ is then simply diagonal:
$ \bold G_{ii} k = \partial_{x_i} \partial_{y_i} k_i$.
For direct products, substituting these sparse expressions into the general product rule \eqref{eq:product} above
yields a rank-one update to a diagonal matrix.
Therefore, the computational complexity of multiplying a vector with $\bold G [k](\bold x, \bold y)$ for separable kernels is $\mathcal{O}(d)$.
Notably, the above structure can be readily generalized for block-separable kernels, 
whose constituent kernels act on more than one dimension.
The $\mathcal{O}(d)$ complexity is also attained as long as every constituent kernel only applies to a constant number of dimensions as $d \to \infty$, or itself allows a multiply that is linear in the dimensionality of the space on which it acts.

\paragraph{Vertical Rescaling}
If $k(\bold x, \bold y) = f(\bold x) h(\bold x, \bold y) f(\bold y)$ for a scalar-valued $f$, then
\[
\begin{aligned}
\bold G [k](\bold x, \bold y) &=
f(\bold x) \bold G [h](\bold x, \bold y) f(\bold y) \ +
\\ 
\nabla_{\bold x} \begin{bmatrix}
 f(\bold x) &
k(\bold x, \bold y)
\end{bmatrix}
&
\begin{bmatrix}
h(\bold x, \bold y) & f(\bold y) \\
f(\bold x) & 0
\end{bmatrix}
\nabla_{\bold y} \begin{bmatrix}
f(\bold y) &
k(\bold x, \bold y)
\end{bmatrix}^\top
\end{aligned}
\]
Again, $\bold G[k]$ is a low-rank (rank two) correction to $\bold G[h]$.

\paragraph{Warping}
The so called ``warping'' of inputs to GPs is an important technique for the incorporation of non-trivial problem structure, especially of a non-stationary nature \citep{snelson2004warped, lazaro2012bayesian, marmin2018warped}.
In particular, given some potentially vector-valued
warping function $\bold u: \bold R^d \to \bold R^{r}$
a warped kernel can be written as $k(\bold x, \bold y) = h(\bold u(\bold x), \bold u(\bold y))$,  which leads to
\[
\bold G[k](\bold x, \bold y) = \bold J[ \bold u](\bold x)^\top \
\bold G[h](\bold u(\bold x), \bold u(\bold y)) 
\ \bold J[ \bold u](\bold y).
\]
We can factor out the Jacobian factors as block-diagonal matrices 
$\text{diag}(\bold J[\bold u](\bold X))_{ii} = \bold J[\bold u](\bold x_i)$ 
from the gradient kernel matrix $\bold K^{\nabla}$, leading to an efficient representation:
\[
\bold K^{\nabla} = \text{diag}(\bold J[\bold u](\bold X))^\top \ \bold H^{\nabla} \ \text{diag}(\bold J[\bold u](\bold X)).
\] 
Taking advantage of the above structure, the complexity of multiplication with the gradient kernel matrix can be reduced to $\mathcal{O}(n^2r + ndr)$, which is $\mathcal{O}(n^2d)$ for $n~>~d~\geq~r$.
Important examples of warping functions are energetic norms or inner products of the form
$\bold r^\top \bold E \bold r$ or $\bold x^\top \bold E \bold y$ for some positive semi-definite matrix $\bold E$.
In this case, we can factor $\bold E = \bold U^\top \bold U$ in a pre-computation that is independent of $n$ using a pivoted Cholesky decomposition using $\mathcal{O}(dr^2)$ operations for a rank $r$ matrix,
and let $\bold u(\bold x) = \bold U \bold x$,
so that $\bold J[\bold u] = \bold U$.
This gives rise to a Kronecker product structure in the Jacobian scaling matrix
$\text{diag}(\bold J[\bold u](\bold X)) = \bold I_n \otimes \bold U$,
and enables subspace search techniques for BO,
like the ones of \citet{wang2013bayesian},
\citet{eriksson2018scaling}, and \citet{kirschner2019adaptive},
to take advantage of the structures proposed here.
If $\bold E$ is diagonal as for automatic relevance determination (ARD), 
one can simply use $\bold U = \sqrt{\bold E}$, and the complexity of multiplying with $\bold K^\nabla$
is $\mathcal{O}(n^2d + nd)$.
Notably, the matrix structure and its scaling also extend to complex warping functions $\bold u$, like \citet{wilson2016deep}'s deep kernel learning model.

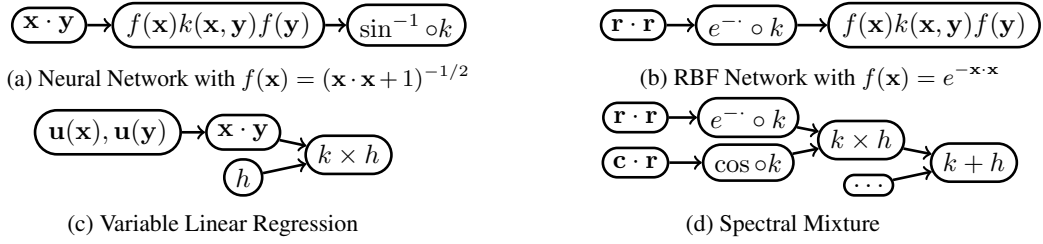
\begin{figure*}[t!]
\centering
\hspace*{\fill}
\subcaptionbox{Neural Network with $f(\bold x) = (\bold x \cdot \bold x + 1)^{-1/2}$}{%
\begin{tikzpicture}[scale=1]
\tikzstyle{every node}=[rounded rectangle, line width = 1., radius=0.5ex, draw] 
\node (a) at (0,1) {$\bold x \cdot \bold y$};
\node (b) at (2.25,1) {$f(\bold x) k(\bold x, \bold y) f(\bold y)$};
\node (c) at (4.75,1) {$\sin^{-1} \circ k$};
\draw [line width = 1.] (a) edge[->] (b)
(b) edge[->] (c);
\end{tikzpicture}
}
\hfill
\subcaptionbox{RBF Network with $f(\bold x) = e^{-\bold x \cdot \bold x}$}{%
\begin{tikzpicture}[scale=1.]
\tikzstyle{every node}=[rounded rectangle, line width = 1., radius=0.5ex, draw] 
\node (a) at (0,1) {$\bold r \cdot \bold r$};
\node (b) at (1.5,1) {$e^{-\cdot} \circ k$};
\node (c) at (4,1) {$f(\bold x) k(\bold x, \bold y) f(\bold y)$};
\draw [line width = 1.] (a) edge[->] (b)
(b) edge[->] (c);
\end{tikzpicture}
}
\hspace*{\fill}
\\
\hspace*{\fill}
\subcaptionbox{Variable Linear Regression}{%
\begin{tikzpicture}[scale=1.]
\tikzstyle{every node}=[rounded rectangle, line width = 1., radius=0.5ex, draw] 
\node (a) at (0,1) {$\bold u(\bold x), \bold u(\bold y)$};
\node (b) at (1.8,1) {$\bold x \cdot \bold y$};
\node (c) at (3.2,.7) {$k \times h$};
\node (d) at (1.8,.4) {$h$};
\draw [line width = 1.] (a) edge[->] (b)
(b) edge[->] (c) 
(d) edge[->] (c) ;
\end{tikzpicture}
}
\hfill
\subcaptionbox{Spectral Mixture}{%
\qquad
\begin{tikzpicture}[scale=1.]
\tikzstyle{every node}=[rounded rectangle, line width = 1., radius=0.5ex, draw] 
\node (a) at (0,1) {$\bold r \cdot \bold r$};
\node (b) at (1.5,1) {$e^{-\cdot} \circ k$};
\node (c) at (3,.7) {$k \times h$};
\node (d) at (0,.4) {$\bold c \cdot \bold r$};
\node (e) at (1.5,.4) {$\cos \circ k$};
\node (f) at (4.5,.4) {$k + h$};
\node (g) at (3.1,.1) {$\hdots$};
\draw [line width = 1.] (a) edge[->] (b)
(b) edge[->] (c) 
(d) edge[->] (e)
(e) edge[->] (c)
(c) edge[->] (f)
(g) edge[->] (f);
\end{tikzpicture}
}
\hspace*{\fill}
\caption{Computational graphs of composite kernels whose gradient kernel matrix can be expressed with the data-sparse structured expressions derived in Section \ref{sec:gradstructure}.
Inside a node, $k$ and $h$ refer to kernels computed by previous nodes.
}
\label{fig:compgraphs}
\end{figure*} 

\paragraph{Composite Kernels}

Systematic application of the rules and data-sparse representations of $\bold Gk$ for the transformations and compositions of kernels above gives rise to similar representations for many more complex kernels.
Examples include the neural network kernel 
$\arcsin(\tilde{\bold x} \cdot \tilde{\bold y})$, where $\tilde{\bold x} = \bold x / \sqrt{\|\bold x\|_2^2 + 1}$,
the RBF-network kernel 
$\exp(- \|\bold x\|^2 -\bold r \cdot \bold r/2 - \|\bold y\|^2)$,
the spectral mixture kernel of \citet{wilson2013sm}, and
the kernel $\boldsymbol \phi(\bold x)^\top \bold W \boldsymbol \phi(\bold y) h(\bold x, \bold y)$ corresponding to a linear regression with variable coefficients,
where $\boldsymbol \phi(\bold x)$ are the regression features, $\bold W$ is the prior covariance of the weights, and $h$ is a secondary kernel controlling the variability of the weights
\citep{rasmussen2005gpml}. 
See Figure~\ref{fig:compgraphs} for a depiction of these kernels' computational graphs,
where each node represents a computation that we treated in this section.
These examples highlight the generality of the proposed approach,
since it applies {\it without specializations} to these kernels,
and is simultaneously the first to enable a linear-in-$d$ multiply with their gradient kernel matrices $\bold K^\nabla$.

\subsection{Hessian Kernel Structure}
\label{sec:hessstructure}

Under appropriate differentiability assumptions (see Sec.~\ref{sec:gp_differentiability}), we can condition a GP on Hessian information.
However, incorporating second-order information into GPs has so far -- except for one and two-dimensional test problems by \citet{wu2017exploiting} -- not been explored.
 This is likely due to the prohibitive $\mathcal{O}(n^2 d^4)$ scaling for a matrix multiply with the associated covariance matrix and $\mathcal{O}(n^3d^6)$ scaling for direct matrix inversion. 
In addition to the special structure for the gradient-Hessian cross-covariance, already reported by \citet{de2021high},
we derive a structured representation of the Hessian-Hessian covariance for isotropic kernels, enabling efficient computations with second-order information.
In particular, letting $\bold h_{\bold x} = \text{vec}(\bold H_{\bold x})$ where $\bold H_{\bold x}$ is the Hessian w.r.t. $\bold x$ and $r = \bold r \cdot \bold r$:
\[
\begin{aligned}
\bold h_{\bold x} \nabla_{\bold y}^\top k (\bold x, \bold y) &= 
-f''(r) 
(\bold I_d \otimes \bold r + \bold r \otimes \bold I_d)  
\\
&- [f''(r) \text{vec}(\bold I_{d}) + f'''(r) \text{vec}(\bold r \bold r^\top)]
\bold r^\top, \ \text{and} \\
\bold h_{\bold x} \bold h_{\bold y}^\top k (\bold x, \bold y) &= 
(\bold I_{d^2} + \bold S_{dd}) [f''(r) \bold I_{d^2} \\
&+ 
f'''(r) (\bold r \bold r^\top \oplus \bold r \bold r^\top) ]  + \bold V \bold C \bold V^\top,
\end{aligned}
\]
where 
$\bold V = \begin{bmatrix}
\text{vec}(\bold I_{d}) & \text{vec}(\bold r \bold r^\top)
\end{bmatrix} \in \R^{d^2 \times 2}$,
$\bold C \in \R^{2 \times 2}$ with 
$\bold C_{ij} = \partial^{(i+j)} f(r)$,
$\bold S_{dd}$ is the ``shuffle'' matrix that satisfies
$\bold S_{dd} \text{vec}(\bold A) = \text{vec}(\bold A^\top)$,
and
$\bold A \oplus \bold B
=
\bold A \otimes \bold I + \bold I \otimes \bold B$ is the Kronecker sum.
Thus, it is possible to multiply with covariance matrices that arise from conditioning on second-order information
in $\mathcal{O}(n^2d^2)$, which is {\it linear} in the $\mathcal{O}(d^2)$ amount of information contained in the Hessian matrix and therefore optimal with respect to the dimensionality.
This is an attractive complexity in moderate dimensionality since Hessian observations are highly informative of a function's local behavior.
For derivations of the second-order covariances for more kernel types and transformations,
see Sec.~\ref{sec:hessian_appendix}.

\subsection{An Implementation: 
\href{https://github.com/SebastianAment/CovarianceFunctions.jl}{\texttt{CovarianceFunctions.jl}}}
\label{sec:implementation}

To take advantage of the analytical observations above in an {\it automatic} fashion, several technical challenges need to be overcome.
First, we need a representation of the computational graph of a kernel function that is built from the basic constituents and transformations that we outlined above, akin to Figure \ref{fig:compgraphs}.
Second, we need to build matrix-free representations of the gradient kernel matrices to maintain data-sparse structure.
Here, we briefly describe how we designed \texttt{CovarianceFunctions.jl},
an implementation of the structured AD technique that is enabled by the analytical derivations above, 
and supporting libraries, all written in Julia \citep{bezanson2017julia}.

\texttt{CovarianceFunctions.jl} represents kernels at the level of user-defined types.
It is in principle possible to hook into the abstract syntax tree (AST) to recognize these types of structures more generally \citep{innes2018zygote},
but this would undoubtedly come at the cost of increased complexity.
It is unclear if this generality would have applications outside of the scope of this work.
A user can readily extend the framework with a new kernel type 
if it can not already be expressed as a combinations and transformations of existing kernels.
All that is necessary is the definition of its evaluation and the following short function:
\texttt{input\_trait} returns the type of input the kernel depends on: isotropic, dot-product, or the stationary type $\bold c \cdot \bold r$ and automatically detects homogeneous products and sums of kernels with these input types.
As an example, 
for the rational quadratic kernel, we have
\[
\texttt{input\_trait(RQ$_\alpha$)} = \texttt{Isotropic()}
\]

\begin{figure*}[t!]
  \centering
\includegraphics[width=.9\textwidth]{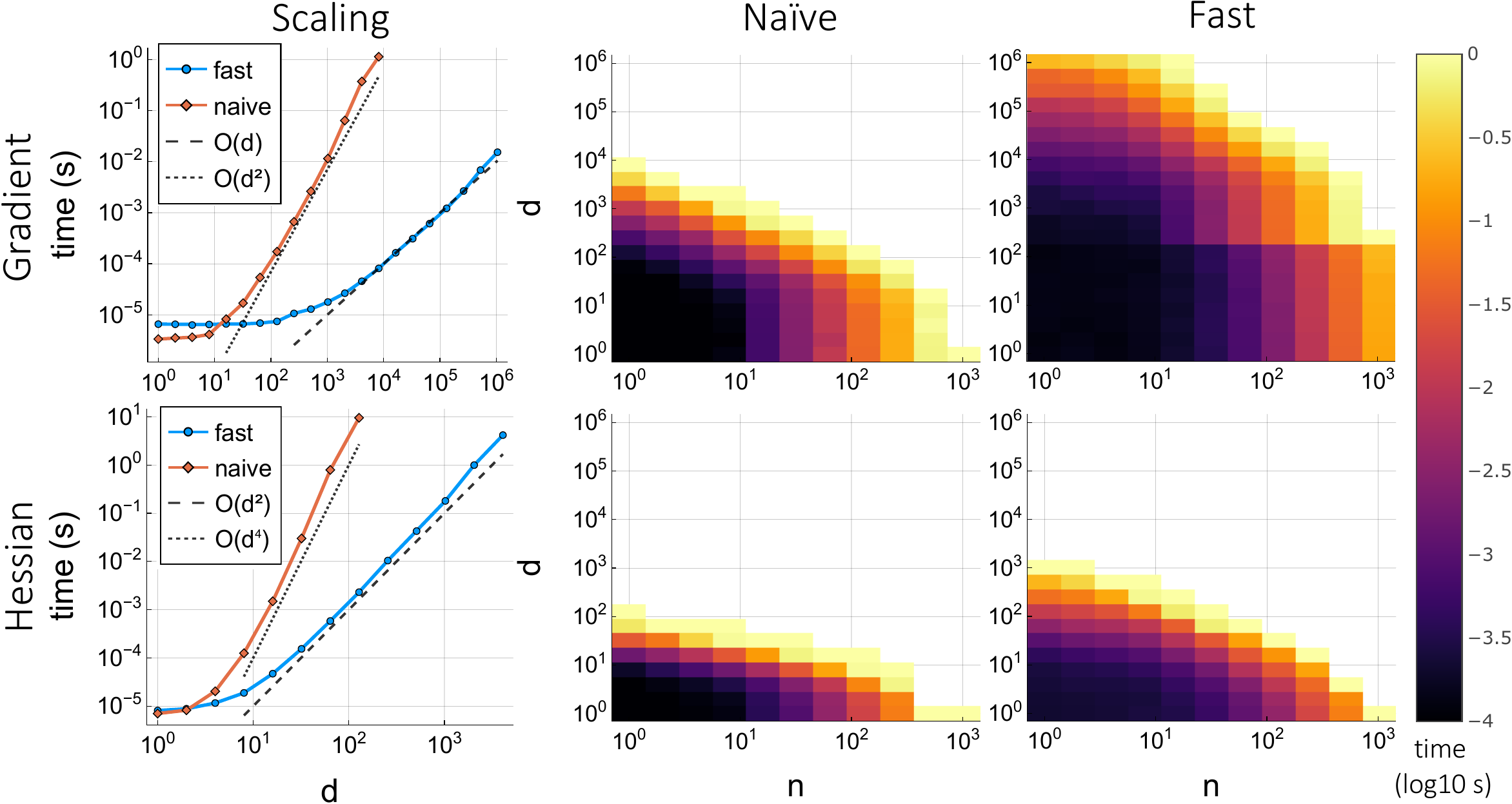}
  \caption{Benchmarks of matrix-vector-multiplications with the gradient (top) and Hessian kernel matrices (bottom) using a rational quadratic kernel. The scaling experiments (left) exhibit the predicted $\mathcal{O}(d)$ (resp. $\mathcal{O}(d^2)$) scaling of the fast algorithm for the gradient (resp. Hessian) kernel matrix with $n = 1$.
  The heat maps for the na\"ive (middle) and fast (right) algorithms show the execution time (color) as a function of $n$ and $d$.
  The fast methods exhibit a much larger region of sub-second run-times in the $n$-$d$ space, a proxy for the efficient applicability of the methods to problems of a particular size. 
  Note that both axes are exponential; even the visually modest improvements for the Hessian allow between one to two orders of magnitude higher dimensionality than the na\"ive approach given the same run-time.}
  \label{fig:scaling_rq}
\end{figure*}

Our implementation uses \texttt{ForwardDiff.jl} \cite{revels2016fd} to compute the regular derivatives and gradients that arise in the structured expressions to achieve a high level of generality
and for a robust fall-back implementation of all the relevant operators in case no structure can be inferred in the input kernel.
Even though the memory requirements of the $n^2$ data-sparse blocks are much reduced to the dense case, 
a machine can nevertheless run out of memory if the number of samples $n$ gets very large and all blocks are stored in memory.
To scale the method up to very large $n$, our implementation employs {\it lazy evaluation} of the gradient kernel matrix to achieve a constant, $\mathcal{O}(1)$, memory complexity for a matrix-vector multiply. %

The main benefit of this system is that researchers and practitioners of BO
{\it do not need to derive a special structured representation} for each kernel they want to use for an accurate modeling of their problem.
As an example, the structured AD rules of Section \ref{sec:gradstructure} obviate
the special derivation of the neural network kernel in Section \ref{sec:nn},
In our view, this has the potential to greatly increase the uptake of first-order BO techniques
outside of the immediate field of specialists.

\section{Experiments}
\label{sec:experiments}

\subsection{Scaling on Synthetic Data}
\label{sec:scaling}

First, we study the practical scaling of our implementation of the proposed methods with respect to both dimension and number of observations.
See Figure~\ref{fig:scaling_rq} for experimental results using the non-separable rational quadratic kernel.
Importantly, we observe virtually the same scaling behavior for more complex kernels like the neural network kernel.
Further, we stress that the scaling results are virtually indistinguishable for different kernels,
see also Figure~\ref{fig:scaling_nn} for similar experiments using the exponentiated dot-product and more complex neural network kernel.

 The exponentiated dot-product kernel $\exp(\bold x \cdot \bold y)$ was recently used by \citet{karvonen2021taylor} to derive a probabilistic Taylor-type expansion of multi-variate functions.
 The neural network kernel is derived as the limit of a neural network with one hidden layer, as the number of hidden units goes to infinity \citep{rasmussen2005gpml}.
The scaling plots on the left of Figure \ref{fig:scaling_rq} and \ref{fig:scaling_nn} were created with a single thread to minimize constants, while the heat-maps on the right were run with 24 threads on 12 cores in parallel to highlight the applicability of the methods on a modern parallel architecture.
 
 \begin{figure*}[t!]
  \centering
\includegraphics[width=.9\textwidth]{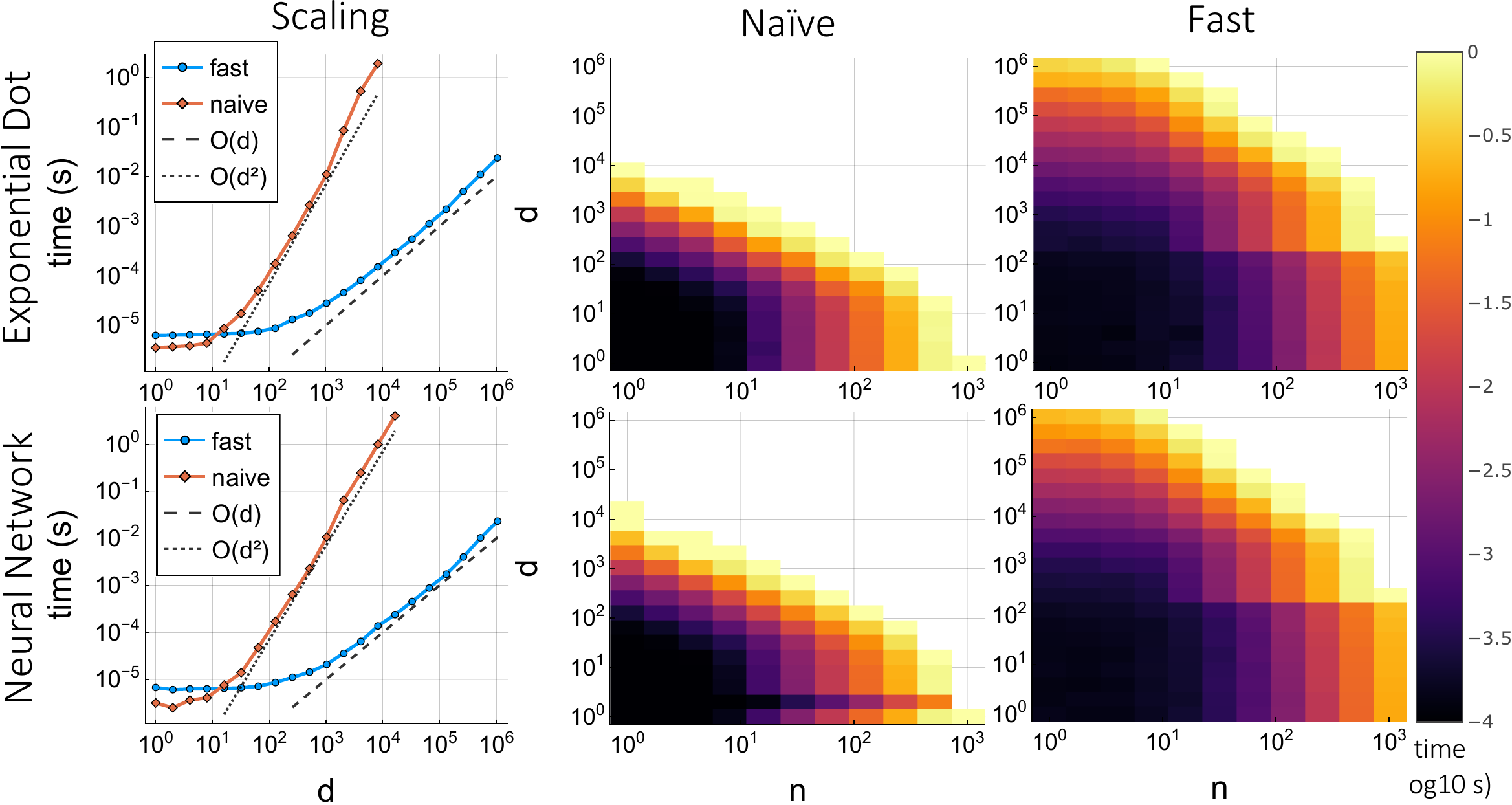}
  \caption{Benchmarks of matrix-vector multiplications with 
  the gradient kernel matrices using the exponentiated dot-product (top) and neural network kernels (bottom).
  Compared to the performance of the rational quadratic kernel in Figure~\ref{fig:scaling_rq}, the results for the composite neural network kernel are virtually indistinguishable and also exhibit the fast $\mathcal{O}(d)$ scaling. 
 }
  \label{fig:scaling_nn}
\end{figure*}

\begin{figure}[t!]
  \centering
\includegraphics[width=.48\textwidth]{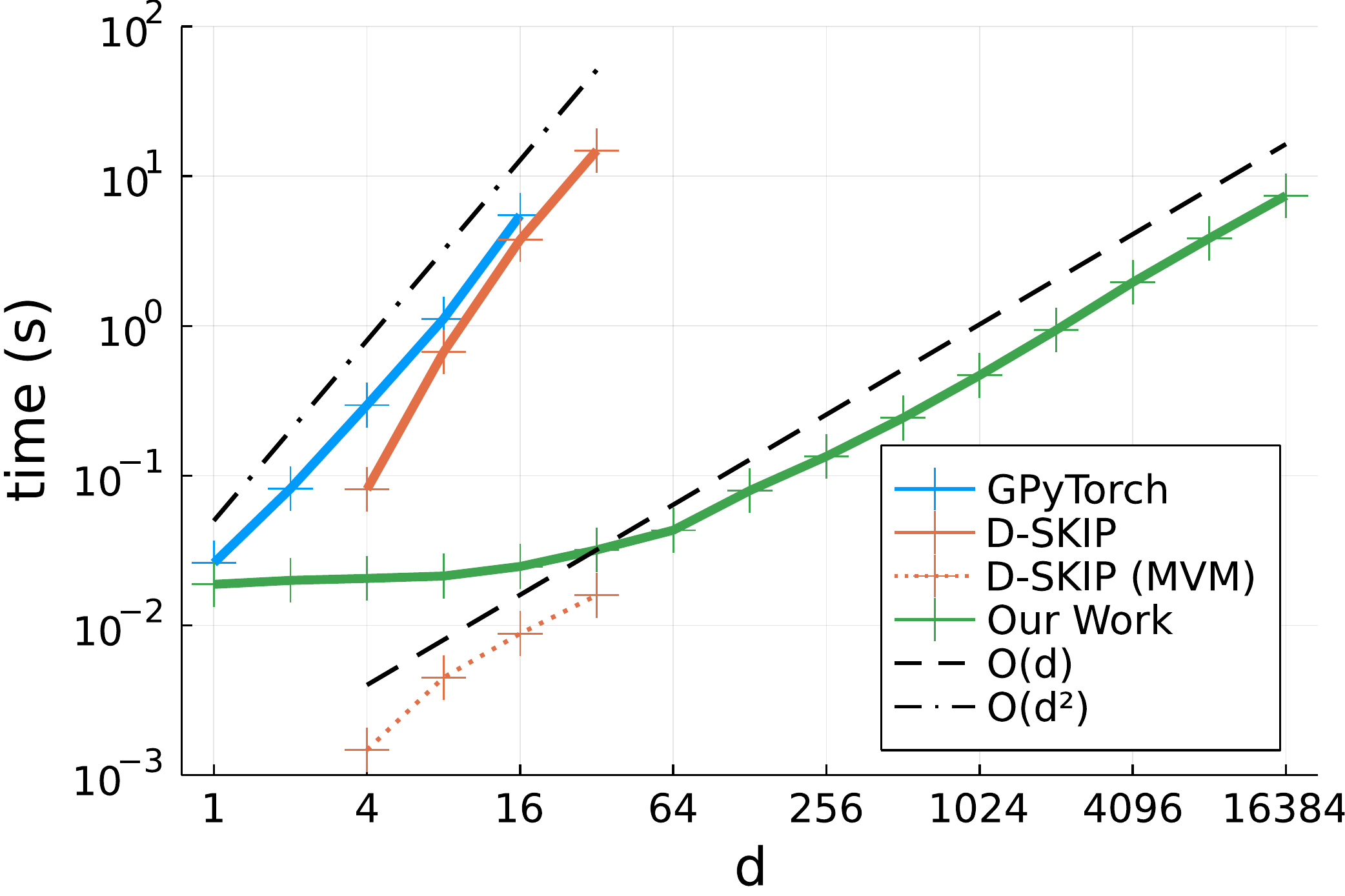}
  \caption{Time to first MVM of GPyTorch, D-SKIP, and our work for RBF gradient kernel matrices with $n = 1024$.}
  \label{fig:mvm_time_comparisons}
\end{figure}

\subsection{Comparison to Prior Work}
\label{sec:comparison_existing}
\paragraph{Existing Libraries}
While popular libraries like 
\href{https://gpytorch.ai}{GPyTorch}, 
\href{https://www.gpflow.org}{GPFlow},
and 
\href{https://scikit-learn.org/stable/}{Scikit-Learn}
have efficient implementations for the generic GP inference problem,
they do {\it not} offer efficient inference {\it with gradient observations},
see Table~\ref{table:gradient_kernel_comparison}
\cite{gardner2018gpytorch, matthews2017gpflow, scikit-learn}.
Highlighting the novelty of our work, GPyTorch 
contains only two implementations for this case
-- \href{https://docs.gpytorch.ai/en/stable/kernels.html?highlight=RBFKernelGrad#rbfkernelgrad}{RBFKernelGrad} 
and 
\href{https://docs.gpytorch.ai/en/stable/kernels.html?highlight=PolynomialKernelGrad#polynomialkernelgrad}{PolynomialKernelGrad} --
both with the na\"ive $\mathcal{O}(n^2d^2)$ matrix-vector multiplication complexity,
 hand-written work that is both obviated and outperformed by our structure-aware AD engine,
 see Figure~\ref{fig:mvm_time_comparisons}.
 Thus, \href{https://botorch.org}{BoTorch}, which depends on GPyTorch, does not yet support efficient FOBO.
Neither GPFlow nor 
SKLearn contain any implementations of gradient kernels.
\href{https://github.com/dragonfly/dragonfly}{Dragonfly} 
and
\href{https://github.com/rmcantin/bayesopt/}{BayesOpt}
do not support gradient observations.

\begin{table}[h!]
\centering
  \caption{\small MVM complexity with select gradient kernel matrices.
  SM = spectral mixture kernel, NN = neural network kernel. \\
  $^*$See the discussion on the right about D-SKIP's complexity.}
\label{table:gradient_kernel_comparison}
  \begin{tabular}{c c c c c}
    \toprule
    	 & RBF & SM & NN \\
	\midrule
	GPFlow / SKLearn & \xmark & \xmark & \xmark \\
	GPyTorch & $\mathcal{O}(n^2d^2)$ & \xmark & \xmark\\ 
	\citep{eriksson2018scaling} & $\mathcal{O}(nd^2)^*$ & \xmark & \xmark \\  
	\citep{de2021high} & $\mathcal{O}(n^2d)$ & \xmark & \xmark \\
	\href{https://github.com/SebastianAment/CovarianceFunctions.jl#gradient-kernels}{Our work}
	 & $\mathcal{O}(n^2d)$ & $\mathcal{O}(n^2d)$ & $\mathcal{O}(n^2d)$ \\
    \midrule
    \bottomrule
  \end{tabular}
\end{table}

\paragraph{\citet{eriksson2018scaling}'s D-SKIP} 
D-SKIP is an {\it approximate} method
and requires
that the kernel can be expressed as a separable product
and further, that the resulting constituent kernel matrices have a low rank.
In contrast our method is {\it mathematically exact}
and applies to a large class of kernels without restriction.
D-SKIP needs an upfront cost of $\mathcal{O}(d^2(n+m \log m+r^3n \log d))$,
followed by a matrix-vector multiplication (MVM) cost of $\mathcal{O}(dr^2n)$ for constituent kernel matrices of rank $r$
and $m$ inducing points per dimension.
For a constant rank $r$, D-SKIP's MVM scales both linearly in $n$ {\it and} $d$, while the method proposed herein 
scales quadratically in $n$.
See Figure~\ref{fig:mvm_time_comparisons} for a comparison of D-SKIP's real-world performance,
where D-SKIP's MVM scales linearly in $d$, but the required pre-processing scales quadratically in $d$ and dominates the total runtime.
Note that 
\href{https://github.com/ericlee0803/GP_Derivatives/blob/44934029a5bd8679daa2b2beb4018a30a108582a/code/kernels/se_kernel_grad_skip.m#L28}{D-SKIP's implementation}
is restricted to $d > 4$,
since D-SKI is faster in this regime.
For $d \leq 32$, D-SKIP's pure MVM times are faster than our method,
whose runtime grows sublinearly until $d = 64$ because it takes advantage of vector registers and SIMD instructions.
Notably, the linear extrapolation of D-SKIP's pure MVM times without pre-processing
is within a small factor ($<2$) of the timings of our work for $d \geq 64$,
implying that if D-SKIP were applied to higher dimensions,
the pure MVM times of both methods would be comparable for a moderately large number of observations ($n = 1024$).
Figure~\ref{fig:mvm_accuracy_comparisons} in Section~\ref{sec:accuracy_dskip} shows that D-SKIP is approximate and looses accuracy as $d$ increases, while our method is accurate to machine precision.

 \begin{figure*}[t!]
  \centering
\includegraphics[width=.98\textwidth]{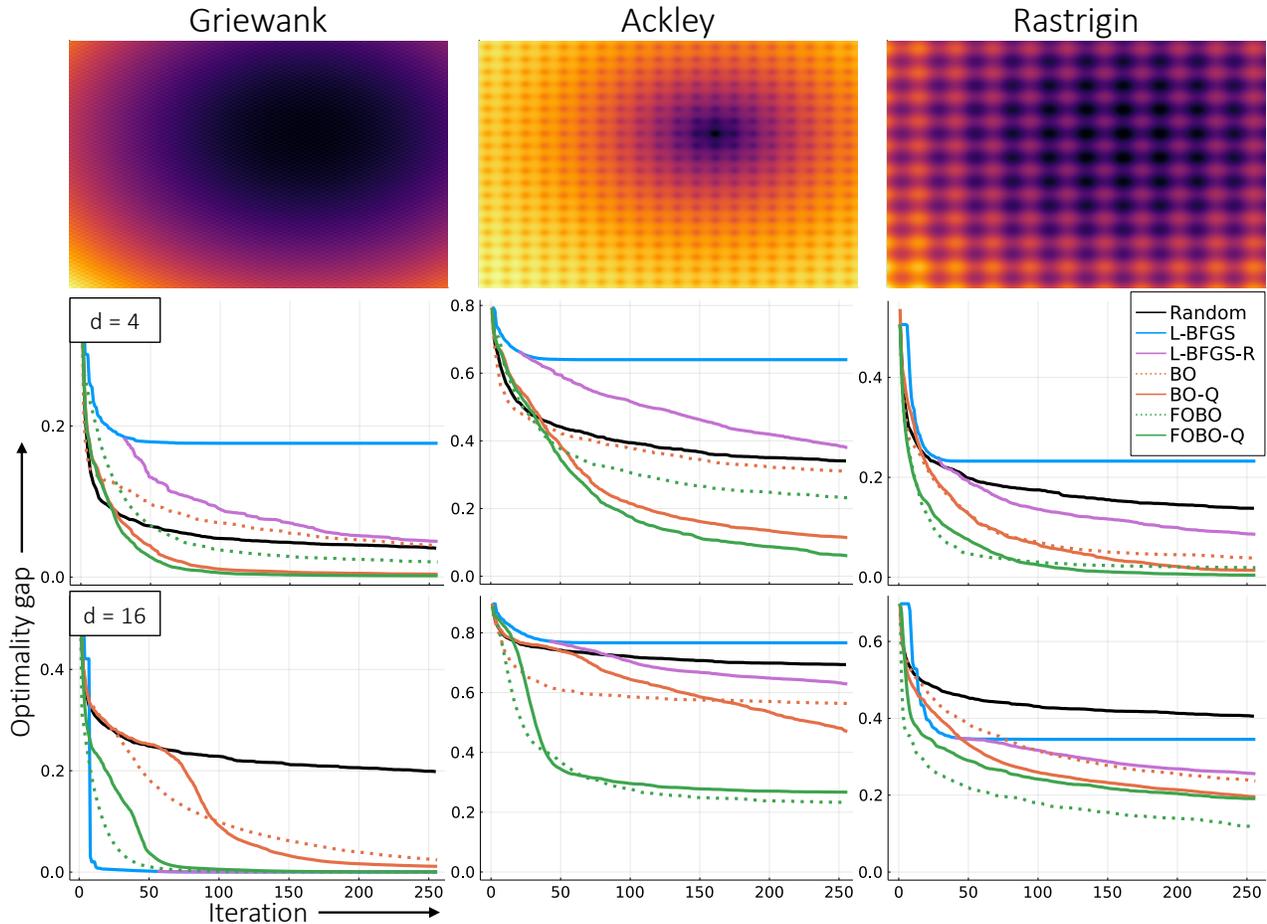}
  \caption{Average optimality gap of optimization algorithms on three non-convex test functions:
  Griewank, Ackley, and Rastrigin (plotted in 2D in top row).
  We compare random sampling (black), L-BFGS (blue), L-BFGS with random restarts after local convergence is detected (L-BFGS-R in purple), BO (dotted orange), BO with the quadratic mixture kernel of Section~\ref{sec:test} (BO-Q in solid orange), 
  FOBO (dotted green), and FOBO with the quadratic mixture kernel (FOBO-Q in solid green).
  All the BO variants use the expected improvement acquisition function
  and each line is the average optimality gap over 128 independent experiments.
  Notably, FOBO-Q outperforms on all 4D problems, 
  L-BFGS converges most rapidly on the 16D Griewank function because its many local minima vanish as $d$ increases so that the purely local search results in the fastest global convergence,
  and FOBO achieves the best optimality gap on the 16D Ackley and Rastrigin functions.
 }
  \label{fig:bo_benchmark}
\end{figure*}

\subsection{Bayesian Optimization}
\label{sec:test}

\citet{shekhar2021significance} proved that gradient information can lead to an exponential improvement in regret for multi-armed bandit problems, compared to zeroth-order BO,
by employing a two-stage procedure, the first of which hones in to a locally quadratic optimum of the objective.
Inspired by this result and studying the qualitative appearance of many test functions of \citet{bingham2013test}, a promising model for these objectives $f$ is a sum of functions $f = g + h$, where $g$ is a quadratic function and $h$ is a potentially quickly varying, non-convex function.
Since $h$ is arbitrary, the model does not restrict the space of functions,
but offers a useful inductive bias for problems with a globally quadratic structure ``perturbed'' by a non-convex function.
Assuming the minimum of the quadratic function coincides or is close to a global minimum to the objective, this structure can be exploited to accelerate convergence.

Herein, we model $g$ with a GP with the kernel $(\bold x \cdot \bold y + c)^2$, a distribution over quadratic functions whose stationary points are regularized by $c$,
while $h$  is assumed to be drawn from a GP with a Mat\'ern-5/2 kernel $k$,
to model quickly varying deviations from $g$.
Then $f = g + h \sim \GP(0, k(\bold x, \bold y) + (\bold x \cdot \bold y + c)^2)$. 
Notably, the resulting kernel is a composite kernel with isotropic and dot-product constituents, and a quadratic transformation.
Without exploiting this structure automatically as proposed above, one would have to derive a new fast multiply by hand, slowing down the application of this model to BO.

Notably, this model is similar to the one employed by \citet{gal2017deep},
who used a quadratic function as the prior mean, requiring a separate optimization or marginalization of the location and covariance of the mean function.
In contrast, we model the quadratic component with a specialized kernel, whose treatment only requires linear operations.

We benchmark both Bayesian and canonical optimization algorithms with and without gradient information on some of the test functions given by \citet{bingham2013test},
namely, the Griewank, Ackley, and Rastrigin functions.
See Section~\ref{sec:test_appendix} for the definitions of the test functions.
For all functions, 
we scaled the input domains to lie in $[-1, 1]^d$, scaled the output to lie in $[0, 1]$,
and shifted the global optimum of all functions to $\bold 1_d / 4$.
The top row of Figure~\ref{fig:bo_benchmark} shows plots of the non-convex functions in two dimensions.

Figure~\ref{fig:bo_benchmark} shows the average optimality gap over 128 independent experiments in four and sixteen dimensions for the following strategies:
random sampling (black), L-BFGS (blue), L-BFGS with random restarts after local convergence is detected (L-BFGS-R in purple), BO (dotted orange), BO with the quadratic mixture kernel of Section~\ref{sec:test} (BO-Q in solid orange), 
  FOBO (dotted green), and FOBO with the quadratic mixture kernel (FOBO-Q in solid green).
  The FOBO variants incorporate both value and gradient observations, see Section~\ref{sec:combiningorders}.
  All the BO variants use the expected improvement acquisition function
  which is numerically optimized w.r.t. the next observation point using L-BFGS.
  If the proposed next observation lies within $10^{-4}$ of any previously observed point,
  we choose a random point instead (see Algorithm~\ref{alg:bo}), similar to the L-BFGS-R strategy.
  This helps escape local minima in the acquisition function and improves the performance of all BO algorithms.

\begin{algorithm}[t!]
\caption{Bayesian Optimization with Restarts}
\label{alg:bo}
\begin{algorithmic}[1]
\STATE {\bfseries Input:} acquisition $a$, objective $f$, prior $p \sim \GP$
\STATE {\bfseries Ouput:} potential minimizer $\bold x^*$
\STATE initialize empty $\bold X$, $\bold y$ \\
\WHILE {budget not exhausted}
	\STATE $c \leftarrow p \ \big | \ \bold X, \bold y$  \hfill \COMMENT{compute conditional process} \\
	\STATE $\bold x_t \leftarrow \text{\it local} \arg \min_{\bold z} a(c, \bold z)$  \hfill \COMMENT{L-BFGS started at $\bold x_t$} \\
	\IF { $\min_{\bold z \in \bold X} \| \bold x_t - \bold z \| < \epsilon$} 
		\STATE $\bold x_t \leftarrow$ random point in domain of $f$
	\ENDIF
	\STATE append $\bold x$ to $\bold X$ and $f(\bold x_t)$ to $\bold y$
\ENDWHILE
\STATE $\bold x^* \leftarrow \arg \min_{\bold z \in \bold X} f(\bold z)$
\end{algorithmic}
\end{algorithm}

  FOBO-Q outperforms on all 4D problems, 
  L-BFGS converges most rapidly on the 16D Griewank function because its many local minima vanish as $d$ increases so that the purely local search results in the fastest convergence,
  and FOBO achieves the best optimality gap on the 16D Ackley and Rastrigin functions.
While the Rastrigin function contains a quadratic component analytically, its inference appears to become more difficult as the dimension increases, leading FOBO to outperform the Q variant.
But surprisingly, the Q variants outperform on the Ackley function for $d=4$,
even though it does {\it not} contain a quadratic component.

 \section{Conclusion}

\paragraph{Limitations} 
Incorporating gradient information improves the performance of BO,
 but the global optimization of non-convex functions remains {\it NP-hard} (see Section~\ref{sec:optimization_theory}) and can't be expected to be solved in general.
 For example, the optimum of the 16D Ackley function is elusive for all methods,
likely because its domain of attraction shrinks exponentially with $d$.
While we derived structured representations for kernel matrices arising from Hessian observations, we primarily focused on first-order information.
We demonstrated the improved computational scaling and feasibility of computing with Hessian observations in our experiments but did not use this for BO.
We leave a more comprehensive comparison of first and second-order BO to future work.
Our main goal here was to enable such investigations in the first place, by providing the required theoretical advances,
and practical infrastructure through \href{https://github.com/SebastianAment/CovarianceFunctions.jl}{\texttt{CovarianceFunctions.jl}}.

\paragraph{Future Work}

1) We are excited at the prospect of linking
\citet{maclaurin2015gradient}'s algorithm for the computation of hyper-parameter gradients
with the technology prosed here, {\it enabling efficient FOBO of hyper-parameters}.
2) While the methods proposed here are exact and enable a linear-in-$d$ MVM complexity, $\mathcal{O}(n^2)$ can still become expensive with a large number of observations $n$.
We believe that analysis-based fast algorithms like \citet{ryan2022fast}'s Fast Kernel Transform
could be derived for gradient kernels and hold promise in low dimensions.
Further, BO trajectories can yield redundant information particularly when honing in on a minimum,
which could be exploited using sparse linear solvers like the ones of \citet{ament2021sparse}.
3)~Hessian observations could be especially useful for Bayesian Quadrature,
since the benefit of second-order information for integration is established:
the Laplace approximation is used to estimate integrals in Bayesian statistics
and relies on a single Hessian observation at the mode of the distribution.

\paragraph{Summary} Bayesian Optimization has proven promising in numerous applications and is an active area of research.
Herein, we provided exact methods 
with an $\mathcal{O}(n^2d)$  MVM complexity 
for kernel matrices arising from $n$ gradient observations in $d$ dimensions
and a large class of kernels, enabling first-order BO to scale to high dimensions. 
In addition,
we derived structures that allow for an $\mathcal{O}(n^2d^2)$ MVM with Hessian kernel matrices,
making future investigations into second-order BO and Bayesian Quadrature possible.

\newpage

\small
\bibliography{arxiv_bo}


\newpage
 
\appendix
\onecolumn

\section{Differentiability of Gaussian Processes}
\label{sec:gp_differentiability}
Summarized in \cite{paci2003thesis}, originally due to Adler.
\begin{theorem}[Mean-Square Differentiability, \citet{adler1981geometry}]
A Gaussian process 
with covariance function $k$
has a mean-square partial derivative at $\bold z$
if and only if 
$[\partial_{x_i} \partial_{y_i} k](\bold z, \bold z)$
 exists and is finite.
\end{theorem}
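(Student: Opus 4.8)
The plan is to reduce the assertion to the classical mean-square (Lo\`eve) convergence criterion applied to the difference quotients of $f$. First I would assume without loss of generality that the mean function vanishes: the mean contributes only the deterministic term $(\mu(\bold z + h e_i) - \mu(\bold z))/h$ to the difference quotient, whose convergence is a separate, purely analytic question that I would flag as implicitly part of the hypothesis. Writing $e_i$ for the $i$-th coordinate vector and, for real $h \neq 0$,
\[
D_h \eqdef \frac{f(\bold z + h e_i) - f(\bold z)}{h},
\]
the process $f$ has a mean-square $i$-th partial derivative at $\bold z$ exactly when $D_h$ converges in $L^2(\Omega)$ as $h \to 0$.

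The core of the argument is the mean-square Cauchy criterion: a net $\{D_h\}$ of square-integrable random variables converges in $L^2$ if and only if $\E[D_h D_{h'}]$ tends to a finite limit $L$ as $h, h' \to 0$ jointly. The ``only if'' direction follows from Cauchy--Schwarz, bounding $|\E[D_h D_{h'}] - \E[D^2]|$ by $\|D_h - D\|_2 \|D_{h'}\|_2 + \|D\|_2 \|D_{h'} - D\|_2$; the ``if'' direction follows from the identity $\E[(D_h - D_{h'})^2] = \E[D_h^2] - 2\E[D_h D_{h'}] + \E[D_{h'}^2]$, in which every term converges to $L$, so the left side vanishes and completeness of $L^2$ furnishes the limit. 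Next I would compute, using bilinearity of the covariance,
\[
\E[D_h D_{h'}] = \frac{k(\bold z + h e_i,\, \bold z + h' e_i) - k(\bold z + h e_i,\, \bold z) - k(\bold z,\, \bold z + h' e_i) + k(\bold z,\, \bold z)}{h\, h'},
\]
which is precisely the symmetric second-order mixed difference quotient of $k$ in the variables $x_i$ and $y_i$ at the point $(\bold z, \bold z)$. By definition its joint limit as $h, h' \to 0$ is $[\partial_{x_i} \partial_{y_i} k](\bold z, \bold z)$; hence this quantity exists and is finite if and only if $D_h$ converges in mean square, which is the claim.

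The step requiring the most care is matching the double difference quotient above to the object the notation $[\partial_{x_i} \partial_{y_i} k](\bold z, \bold z)$ denotes: the limit must be taken in $h$ and $h'$ jointly rather than iterated, and one should remark that under the ambient smoothness this coincides with the iterated partial derivative. A secondary point is being explicit that mean-square differentiability also tacitly requires $\mu$ to be differentiable at $\bold z$, so that the theorem's real content is that, given such a mean, the stated condition on $k$ is necessary and sufficient. I would also note that Gaussianity plays no role in the equivalence itself --- it is needed only to conclude, a posteriori, that $(f, \partial_i f)$ is again jointly Gaussian, which is the fact that makes conditioning a GP on gradient observations tractable in the first place.
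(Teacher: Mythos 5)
Your proposal is correct. The paper itself gives no argument here --- it simply defers to the proof of Theorem 2.2.2 in \citet{adler1981geometry} --- and what you have written is precisely the standard argument behind that result: Lo\`eve's mean-square convergence criterion applied to the difference quotients, together with the bilinearity computation identifying $\E[D_h D_{h'}]$ with the mixed second difference quotient of $k$ at $(\bold z, \bold z)$. You also correctly flag the one genuine subtlety, namely that the criterion requires the \emph{joint} limit in $h, h'$ (the generalized second derivative in Lo\`eve's sense) rather than the iterated partial derivative, which is exactly the precision with which Adler states the theorem; your remarks on the mean function and on the irrelevance of Gaussianity to the equivalence are likewise accurate.
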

\begin{proof}
See proof of Theorem 2.2.2 in \cite{adler1981geometry}.
\end{proof}

\begin{theorem}[Sample-Path Continuity, \citet{adler1981geometry}]
A stochastic process $z:\R^d \to \R$
is sample-path continuous if for $\eta > \alpha > 0$, 
\[
E|z(\bold x + \bold r) - z(\bold x)|^\alpha \leq \frac{c\|\bold r\|^{2d}}{|\log \|\bold r\||^{1+\eta}}.
\]
\end{theorem}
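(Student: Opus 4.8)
\noindent The plan is to prove this via the classical Kolmogorov--Chentsov chaining argument in $\R^d$: the moment bound will be shown to force the oscillation of $z$ over successively finer dyadic grids to vanish almost surely, which yields a H\"older-type modulus of continuity on a countable dense set and hence a continuous modification of $z$. (The Garsia--Rodemich--Rumsey inequality gives an alternative route, producing the modulus of continuity in one shot.) Throughout I would pass to a separable version of $z$, so that ``sample-path continuous'' is well posed, and reduce to a fixed cube $Q=[0,1]^d$, since $\R^d$ is a countable union of unit cubes, continuity is local, and the dense-set constructions below agree on overlaps; gluing then produces a global continuous modification, equal to $z$ almost surely pointwise.

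\noindent\textbf{Step 1 (dyadic skeleton and tail bound).} Let $D_n$ be the points of $Q$ with coordinates in $2^{-n}\Z$ and $D=\bigcup_n D_n$. Fix a threshold sequence $(\lambda_n)$ with $\sum_n\lambda_n<\infty$, to be chosen. For $\bold x\in D_n$ and an increment $\bold r$ with $\bold x+\bold r\in D_n$ and $\|\bold r\|\le\sqrt{d}\,2^{-n}$, Markov's inequality applied to the hypothesis gives
\[
\begin{aligned}
P\big(|z(\bold x+\bold r)-z(\bold x)|>\lambda_n\big)
&\le \lambda_n^{-\alpha}\,\frac{c\,\|\bold r\|^{2d}}{|\log\|\bold r\||^{1+\eta}}\\
&\le \frac{c'\,\lambda_n^{-\alpha}\,2^{-2dn}}{n^{1+\eta}}.
\end{aligned}
\]
There are $O(2^{dn})$ such pairs in $D_n$ (each point has $O(1)$ partners at scale $2^{-n}$), so a union bound gives $P(A_n)\le c''\,\lambda_n^{-\alpha}\,2^{-dn}/n^{1+\eta}$, where $A_n$ is the event that some such pair has increment exceeding $\lambda_n$.

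\noindent\textbf{Step 2 (Borel--Cantelli, chaining, modification).} Take $\lambda_n=2^{-\gamma n}$ with $0<\gamma\le d/\alpha$, so that $\sum_n\lambda_n<\infty$ and $\lambda_n^{-\alpha}2^{-dn}=2^{(\alpha\gamma-d)n}\le 1$; then $\sum_n P(A_n)\le c''\sum_n n^{-(1+\eta)}<\infty$. By Borel--Cantelli, almost surely only finitely many $A_n$ occur, and on that event a telescoping estimate along dyadic paths shows $|z(\bold u)-z(\bold v)|\lesssim\sum_{m\ge n}\lambda_m\lesssim 2^{-\gamma n}$ for all $\bold u,\bold v\in D$ with $\|\bold u-\bold v\|\le 2^{-n}$. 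Hence $z|_D$ is a.s.\ uniformly (indeed H\"older-$\gamma$) continuous, so it extends uniquely to a continuous $\tilde z$ on $Q$; and since the hypothesis also forces $z(\bold x+\bold r)\to z(\bold x)$ in $L^\alpha$, hence in probability, $\tilde z=z$ a.s.\ pointwise. Gluing over the unit cubes completes the argument.

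\noindent\textbf{Where the work is.} The delicate point is the exponent bookkeeping in Steps 1--2 rather than any single estimate: the hypothesis carries the critical power $\|\bold r\|^{2d}$, so after the union bound one lands exactly on $2^{(\alpha\gamma-d)n}$, and the choice of $\gamma$ (more generally of $(\lambda_n)$) must be balanced against both the summability of $\sum_n P(A_n)$ and the convergence of the chaining sum $\sum_m\lambda_m$. The logarithmic gain $|\log\|\bold r\||^{-(1+\eta)}$ with $\eta>\alpha$, although not strictly needed in the bare chaining above, is exactly what makes the criterion sharp enough to be verified for Gaussian $z$, where $E|z(\bold x+\bold r)-z(\bold x)|^\alpha=K_\alpha\,\sigma(\bold r)^\alpha$ with $\sigma(\bold r)^2$ the incremental variance and letting $\alpha\to\infty$ turns the requirement into a merely logarithmic decay condition on $\sigma$. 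Everything else --- the dyadic telescope and the passage to the modification --- is routine.
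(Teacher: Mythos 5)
Your argument is correct, but note that the paper does not actually prove this statement: its ``proof'' is a pointer to the Corollary of Theorem~3.2.5 in \citet{adler1981geometry}, so there is no in-paper argument to match against. What you have written is the standard Kolmogorov--Chentsov dyadic chaining proof, and the exponent bookkeeping checks out: with $O(2^{dn})$ nearest-neighbour pairs at level $n$, Markov plus the hypothesis gives $P(A_n)\lesssim \lambda_n^{-\alpha}2^{-dn}n^{-(1+\eta)}$, and the choice $\lambda_n=2^{-\gamma n}$ with $\gamma\le d/\alpha$ makes $\sum_n P(A_n)$ converge, with the logarithmic factor only needed at the critical exponent $\gamma=d/\alpha$. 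Adler's own proof is essentially the same chaining argument, but organized around a more general tail-probability criterion $P(|z(\bold x+\bold r)-z(\bold x)|\ge p(\|\bold r\|))\le q(\|\bold r\|)$ with summability conditions on $p(2^{-n})$ and $2^{dn}q(2^{-n})$; the moment hypothesis is reduced to that criterion by Markov exactly as in your Step~1, so your route is a direct specialization rather than a genuinely different method. Two points you should make explicit if you write this up in full: first, the displayed bound is vacuous at $\|\bold r\|=1$ (where $\log\|\bold r\|=0$) and must be read as holding for $\|\bold r\|$ below some threshold, which is all your argument uses; second, the conclusion ``$z$ itself is sample-path continuous'' (as opposed to ``$z$ has a continuous modification'') requires the separability you invoke at the outset --- for a separable process the a.s.\ uniform continuity on the dyadic set already forces continuity of the original paths, so the final modification-and-gluing step can be replaced by that one-line observation. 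As a bonus, your proof yields H\"older continuity of order up to $d/\alpha$ and only uses $\eta>0$ rather than the stated $\eta>\alpha$, so you in fact prove a slightly stronger statement than the one quoted.
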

\begin{proof}
See Corrollary of Theorem 3.2.5 in \cite{adler1981geometry}.
\end{proof}

\begin{theorem}[Sample-Path Continuity, \citet{adler1981geometry}]
A Gaussian process $z:\R^d \to \R$
is sample-path continuous if for $\eta > \alpha > 0$, 
\[
E|z(\bold x) - z(\bold y)|^2 \leq \frac{c}{|\log \|\bold x - \bold y\||^{1+\eta}}.
\]
\end{theorem}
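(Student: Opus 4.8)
The final statement is the Gaussian-process specialization of the general sample-path continuity criterion stated just above, so the plan is to deduce it from that criterion. The single structural fact that makes this work is that for a Gaussian process every increment $z(\bold x) - z(\bold y)$ is a one-dimensional Gaussian variable, and for such a variable all absolute moments are comparable to the variance: there is a constant $\kappa_p$ depending only on $p$ with $E|z(\bold x) - z(\bold y)|^p \le \kappa_p\,\bigl(E|z(\bold x) - z(\bold y)|^2\bigr)^{p/2}$ (the increment is Gaussian; should $z$ have a nonzero mean, the hypothesis forces $E|z(\bold x)-z(\bold y)|^2 \to 0$ and hence the mean to be continuous, its contribution being a deterministic continuous term, so we may take $z$ centered, in which case this is an equality). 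Hence the hypothesized \emph{covariance} estimate automatically upgrades, for every $p > 0$, to the \emph{$L^p$} estimate
\[
E|z(\bold x) - z(\bold y)|^p \;\le\; \kappa_p\, c^{p/2}\,\bigl|\log \|\bold x - \bold y\|\bigr|^{-p(1+\eta)/2},
\]
whose logarithmic decay rate $p(1+\eta)/2$ grows without bound in $p$.

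To finish, I would first restrict to a fixed compact $K \subset \R^d$ — sample-path continuity is a local property and the estimate above is only informative for small $\|\bold x - \bold y\|$ — and then feed the displayed $L^p$ bound into the general criterion, with its free parameter (the moment exponent $\alpha$ there) taken equal to a $p$ large enough that the induced logarithmic rate $p(1+\eta)/2$ clears the threshold that criterion requires. Since $p$ is at our disposal while $\eta > 0$ is fixed, this rate can be made arbitrarily large, which supplies exactly the room the criterion needs; its conclusion then provides a version of $z$ with uniformly continuous sample paths on $K$, and exhausting $\R^d$ by such compacta gives the claim.

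The one thing that actually has to be checked is cosmetic but fiddly: matching the precise form of the $L^p$ estimate to the precise hypothesis of the general criterion — reconciling the moment order, the power of the logarithm, and the prefactor so that the admissible parameter ranges overlap — and this bookkeeping is where I expect the only real friction. A self-contained alternative that avoids the general criterion is a direct chaining argument: equip $K$ with the canonical pseudometric $\rho(\bold x,\bold y) = \|z(\bold x) - z(\bold y)\|_{L^2}$, so that $\rho(\bold x,\bold y) \le \sqrt{c}\,\bigl|\log\|\bold x-\bold y\|\bigr|^{-(1+\eta)/2}$; a $\rho$-ball of radius $\epsilon$ then contains a Euclidean ball of radius of order $\exp\!\bigl(-(\sqrt c/\epsilon)^{2/(1+\eta)}\bigr)$, so the metric entropy obeys $\log N(\epsilon, K, \rho) \lesssim d\,(\sqrt c/\epsilon)^{2/(1+\eta)}$ and Dudley's entropy integral $\int_0^{\epsilon_0}\sqrt{\log N(\epsilon,K,\rho)}\,d\epsilon$, being of order $\int_0^{\epsilon_0}\epsilon^{-1/(1+\eta)}\,d\epsilon$, converges precisely because $\eta > 0$; Dudley's theorem then delivers a bounded, $\rho$-uniformly-continuous, hence sample-continuous, version of $z$. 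Either route isolates the same moral: in the Gaussian world one pays only for the variance, so a bound on the covariance of increments is already enough for continuity.
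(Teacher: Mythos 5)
The paper does not actually prove this statement; it defers entirely to Adler's book (``Corollary of Theorem 3.2.5''), so there is no internal argument to compare against. Judged on its own terms, your proposal contains one correct proof and one broken one, and you have labelled the broken one as the primary route. The reduction to the preceding general criterion cannot work: that criterion's hypothesis is $E|z(\bold x + \bold r) - z(\bold x)|^\alpha \leq c\|\bold r\|^{2d}/|\log\|\bold r\||^{1+\eta}$, which contains the \emph{polynomial} factor $\|\bold r\|^{2d}$. Raising the moment order of a Gaussian increment only produces powers of its variance, so from the hypothesis $E|z(\bold x)-z(\bold y)|^2 \leq c\,|\log\|\bold x-\bold y\||^{-(1+\eta)}$ you can only ever obtain bounds of the form $\kappa_p c^{p/2}|\log\|\bold r\||^{-p(1+\eta)/2}$, i.e.\ purely logarithmic decay. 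No choice of $p$ makes a logarithmically decaying quantity dominated by $c'\|\bold r\|^{2d}/|\log\|\bold r\||^{1+\eta'}$ as $\|\bold r\|\to 0$: the ratio blows up. This is not the ``cosmetic but fiddly bookkeeping'' you describe; it is exactly the point where the general moment criterion is strictly stronger than what the Gaussian hypothesis supplies, and why the Gaussian case requires a genuinely Gaussian argument (sub-Gaussian tails or chaining in the canonical metric) rather than moment-raising.

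Your ``self-contained alternative'' is, by contrast, correct and is essentially the standard proof of this result (Dudley/Fernique). The entropy computation checks out: $\rho(\bold x,\bold y)\leq\epsilon$ is implied by $\|\bold x-\bold y\|\leq\exp\bigl(-(\sqrt{c}/\epsilon)^{2/(1+\eta)}\bigr)$, hence $\log N(\epsilon,K,\rho)\lesssim d\,(\sqrt{c}/\epsilon)^{2/(1+\eta)}$, and $\int_0^{\epsilon_0}\epsilon^{-1/(1+\eta)}\,d\epsilon<\infty$ precisely when $\eta>0$, so Dudley's theorem yields a $\rho$-uniformly-continuous modification on each compact set; since the hypothesis forces $\rho\to 0$ as $\|\bold x-\bold y\|\to 0$, this gives Euclidean sample-path continuity. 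Two small things to make explicit: you need separability (or to speak of a modification) to pass from Dudley's version back to the given process, and the hypothesis is only meaningful for $\|\bold x-\bold y\|$ bounded away from $1$ (where the logarithm vanishes), which your restriction to small scales on a compact set already handles. I would recommend promoting the Dudley argument to the main proof and discarding the reduction to the general criterion entirely.
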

\begin{proof}
See Corrollary of Theorem 3.2.5 in \cite{adler1981geometry}.
\end{proof}

The following result states that every positive-definite isotropic kernel can be expressed as a scale-mixture of Gaussian kernels,
which aids the derivation of their differentiability properties.
In particular,
\begin{theorem}[\citet{schoenberg1938metric}]
\label{theorem:schoen}
Suppose an isotropic kernel function $k$ is positive-definite on a Hilbert space.
Then there is a non-decreasing and bounded $H$ such that
\begin{equation}
k(\tau) = \int_0^\infty \exp(-\tau^2 s) dH(s).
\end{equation}
We refer to $s$ as the scale parameter.
\end{theorem}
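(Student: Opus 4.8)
The plan is to reduce the statement to the classical characterization of functions that are positive-definite on \emph{every} Euclidean space, and then invoke the Hausdorff--Bernstein--Widder representation of completely monotone functions. First I would fix notation: positive-definiteness of the isotropic kernel $k$ on a Hilbert space means that $(\mathbf x, \mathbf y) \mapsto k(\|\mathbf x - \mathbf y\|)$ is a positive semi-definite function on $\R^n$ for every $n \ge 1$ (the finite-dimensional cases follow by restriction, and the converse direction because any finite point configuration in $\ell^2$ embeds isometrically into some $\R^n$). Set $g(t) := k(\sqrt t)$ for $t \ge 0$; the goal then becomes to show that $g$ is completely monotone on $[0,\infty)$, i.e.\ $g \in C^\infty(0,\infty)$ with $(-1)^m g^{(m)} \ge 0$ for all $m$, together with continuity at $0$.

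The easy direction is to observe that each Gaussian $\tau \mapsto e^{-\tau^2 s}$ with $s \ge 0$ is positive-definite on every $\R^n$ (its Fourier transform is again a Gaussian, hence non-negative), so any mixture $\int_0^\infty e^{-\tau^2 s}\, dH(s)$ against a non-decreasing bounded $H$ is positive-definite on every $\R^n$; thus the real content of the theorem is the converse, namely the extraction of the representing measure $H$.

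For the converse I would use Bochner's theorem in its radial form: positive-definiteness of $k(\|\cdot\|)$ on $\R^n$ yields a finite measure $\nu_n$ on $[0,\infty)$ with $k(\tau) = \int_0^\infty \Omega_n(\tau u)\, d\nu_n(u)$, where $\Omega_n$ is the characteristic function of the uniform law on the unit sphere $S^{n-1} \subset \R^n$ (a normalized Bessel function), and $\nu_n([0,\infty)) = k(0)$. Rescaling the integration variable by $\sqrt n$ and using the classical limit relation $\Omega_n(\sqrt n\,\tau) \to e^{-\tau^2/2}$ as $n \to \infty$, together with tightness of the rescaled measures (bounded total mass $k(0)$, and no escape of mass to infinity because $k$ is continuous at $0$), I would pass to a weakly convergent subsequence and obtain a finite measure $\mu$ on $[0,\infty)$ with $k(\tau) = \int_0^\infty e^{-\tau^2 s}\, d\mu(s)$. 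An alternative, more hands-on route to the same conclusion is to place $m+1$ points of $\R^m$ in a ``random-walk simplex'' configuration so that the positive-semi-definiteness inequality for $k(\|\cdot\|)$ becomes exactly an iterated forward-difference inequality $(-1)^m \Delta_h^m g(t) \ge 0$; letting $h \downarrow 0$ gives $(-1)^m g^{(m)} \ge 0$ for every $m$, so $g$ is completely monotone, and the Hausdorff--Bernstein--Widder theorem then supplies $g(t) = \int_0^\infty e^{-ts}\, dH(s)$ directly.

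Finally, the change of variables $t = \tau^2$ turns $g(t) = \int_0^\infty e^{-ts}\, dH(s)$ into $k(\tau) = \int_0^\infty e^{-\tau^2 s}\, dH(s)$, and boundedness of the non-decreasing $H$ is immediate from $H(\infty) = g(0) = k(0) < \infty$. I expect the main obstacle to be precisely this converse step --- establishing complete monotonicity of $g$, equivalently producing the Gaussian scale mixture: controlling the sphere characteristic functions $\Omega_n$ and their Gaussian limit (or, in the elementary variant, verifying that the simplex configuration realizes the finite differences of $g$ as a genuine quadratic form) and proving tightness of the associated measures is the technical heart of the argument, whereas the forward implication and the closing substitution are routine.
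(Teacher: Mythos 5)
The paper does not actually prove this statement: its ``proof'' is a one-line deferral to Theorem 2 of Schoenberg (1938), so there is no internal argument to compare yours against. What you have written is, in substance, a faithful reconstruction of Schoenberg's own argument (essentially the version in standard treatments of radial positive definite functions): reduce positive definiteness on a Hilbert space to positive definiteness on every $\R^n$ via isometric embedding of finite point sets, pass through the radial form of Bochner's theorem with the spherical characteristic functions $\Omega_n$, use the limit $\Omega_n(\sqrt n\,\tau)\to e^{-\tau^2/2}$ together with a Helly/tightness selection to extract the Gaussian scale mixture, and equivalently recognize $g(t)=k(\sqrt t)$ as completely monotone so that the Hausdorff--Bernstein--Widder theorem applies; the forward direction and the substitution $t=\tau^2$ are routine, as you say. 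Two caveats. First, the theorem as stated in the paper omits the continuity hypothesis on $k$, which your route genuinely requires (Bochner's theorem needs it, and without continuity at $0$ the representing measure need not be finite in the way claimed); you should make that hypothesis explicit. Second, your ``more hands-on'' variant --- realizing $(-1)^m\Delta_h^m g\ge 0$ as a positive-semi-definiteness inequality for a simplex-like configuration of $m+1$ points --- is the one place where the sketch hides real work: you must exhibit a concrete point set and coefficient vector whose quadratic form equals the iterated forward difference for every $m$, and that combinatorial construction is not obvious; either supply it or rely on the Bessel-limit route, which is self-contained given Bochner. At the level of a sketch, the proposal is correct and correctly identifies where the technical difficulty lies.
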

\begin{proof}
This is due to Theorem 2 of \citet{schoenberg1938metric}.
\end{proof}

Sample function (or almost sure) differentiability is a stronger property
that requires a more subtle analysis. 
\citet{paci2003thesis} provided the following result guaranteeing path differentiability 
for isotropic kernels.

\begin{theorem}[Sample-Path Differentiability, \citet{paci2003thesis}]
Consider a Gaussian process with an isotropic covariance function $k$,
and suppose $H(s)$ is related to $k$ as in Theorem~\ref{theorem:schoen}.
If the $2m$ moments $(E_H[s], ..., E_H[s^{2m}])$ of the scale parameter $s$ are finite,
the process is $m$ times sample-path differentiable.
\end{theorem}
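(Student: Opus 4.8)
The plan is to reduce to the Gaussian (squared--exponential) case via Schoenberg's scale--mixture representation (Theorem~\ref{theorem:schoen}) and then lean on the two criteria quoted above: Adler's characterisation of mean--square differentiability and the Gaussian sample--path continuity criterion. Write $k(\bold x,\bold y)=\phi(\|\bold r\|^2)$ with $\phi(u)=\int_0^\infty e^{-us}\,dH(s)$, and normalise $H$ to a probability measure by absorbing $H(\infty)$ into the process variance. The first point I would record is that the hypothesis that $E_H[s^j]$ is finite for $j=1,\dots,2m$ is \emph{equivalent} to $k$ being $2m$ times continuously differentiable on $\R^d\times\R^d$: for $u>0$ the factor $e^{-us}$ already makes $\phi$ of class $C^\infty$, whereas at $u=0$ one differentiates under the integral sign up to $2m$ times, which is licensed by the $H$--integrable dominating function $s^{2m}$ and yields $\phi^{(j)}(0)=\int_0^\infty(-s)^j\,dH(s)$, finite for $j\le 2m$; composing with the smooth map $\bold r\mapsto\|\bold r\|^2$ preserves $C^{2m}$ regularity.

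Next I would extract the mean--square derivatives. Iterating Adler's mean--square differentiability criterion (the first theorem quoted above) in each coordinate, the process has mean--square partial derivatives $\partial^\alpha f$ for every multi--index with $|\alpha|\le m$, since the required diagonal derivatives $[\partial^\alpha_{\bold x}\partial^\alpha_{\bold y}k](\bold z,\bold z)$ exist and are finite by the previous step; each $\partial^\alpha f$ is itself a centred, stationary Gaussian process with covariance $\rho_\alpha(\bold x-\bold y)=\partial^\alpha_{\bold x}\partial^\alpha_{\bold y}k(\bold x,\bold y)$ for a continuous even $\rho_\alpha$. To upgrade from mean--square to sample--path differentiability I would invoke the standard fact (also in Adler) that an $L^2$--differentiable Gaussian field is almost surely differentiable, of the same order, as soon as each top--order derivative process $\partial^\alpha f$ with $|\alpha|=m$ has a continuous modification: the mean--square difference quotients converge to $\partial^\alpha f$, and a continuous version of $\partial^\alpha f$ promotes this to almost sure, locally uniform convergence, identifying the limit as a genuine path derivative. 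Hence it suffices to show each such $\partial^\alpha f$ is sample--path continuous, and, since it is Gaussian, by the quoted continuity criterion it is enough to bound, locally in $\bold x$, the increment $E|\partial^\alpha f(\bold x)-\partial^\alpha f(\bold y)|^2 = 2\bigl[\rho_\alpha(\bold 0)-\rho_\alpha(\bold r)\bigr]$ by $c\,\bigl|\log\|\bold r\|\bigr|^{-(1+\eta)}$ for some $\eta>0$.

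The crux is therefore the estimate $\rho_\alpha(\bold 0)-\rho_\alpha(\bold r)=O(\|\bold r\|^2)$ as $\bold r\to\bold 0$ — a decay far faster than the merely logarithmic one actually required. I would expand $\partial^\alpha_{\bold x}\partial^\alpha_{\bold y}\bigl[\phi(\|\bold r\|^2)\bigr]$ by the Fa\`a di Bruno formula as a finite sum of terms $\phi^{(j)}(\|\bold r\|^2)$ times an even polynomial in the coordinates of $\bold r$, with $0\le j\le 2m$, and estimate it term by term using $\phi^{(j)}(u)=\int_0^\infty(-s)^j e^{-us}\,dH(s)$ together with the elementary bounds $1-e^{-us}\le us$ and $|\phi^{(j)}(u)|\le E_H[s^j]\le E_H[s^{2m}]^{j/(2m)}<\infty$ — the latter using both that $H$ is a probability measure and that the $2m$-th moment is finite. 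Terms carrying a nonzero polynomial factor vanish at $\bold r=\bold 0$ and, at general $\bold r$, are $O(\|\bold r\|^2)$ because the accompanying $|\phi^{(j)}(\|\bold r\|^2)|$ stays bounded; the polynomial--free term involves $\phi^{(m)}$ and contributes $|\phi^{(m)}(0)-\phi^{(m)}(\|\bold r\|^2)|=\int_0^\infty s^m\bigl(1-e^{-\|\bold r\|^2 s}\bigr)\,dH(s)\le\|\bold r\|^2 E_H[s^{m+1}]$. I expect the main obstacle to be precisely this bookkeeping: arranging the expansion so that \emph{every} summand is visibly $O(\|\bold r\|^2)$ with a constant controlled by the moments up to $E_H[s^{2m}]$, the highest moment being consumed by the cross terms $\|\bold r\|^{2\ell}\phi^{(j)}(\|\bold r\|^2)$ whose $\phi$--order $j$ can be as large as $2m$ (already for $m=1$ and $d=1$, $-\psi''(0)+\psi''(r)$ is bounded by a constant times $r^2 E_H[s^2]$, where $\psi(r)=\phi(r^2)$). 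Inserting this bound into the Gaussian continuity criterion gives each $\partial^\alpha f$, $|\alpha|=m$, a continuous modification, and the reduction of the preceding paragraph then delivers the asserted $m$-fold sample--path differentiability.
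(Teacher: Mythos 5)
The paper does not actually prove this statement: its ``proof'' is a one-line pointer to Theorem 10 of \citet{paci2003thesis}, so there is no internal argument to compare against. Your reconstruction assembles exactly the three results the appendix lines up beforehand --- Schoenberg's scale-mixture representation, Adler's mean-square criterion, and the Gaussian sample-path-continuity corollary --- in the way they are evidently intended to be combined, and the quantitative core is correct: with $k=\phi(\|\bold r\|^2)$ and $\phi^{(j)}(u)=\int_0^\infty(-s)^j e^{-us}\,dH(s)$, the Fa\`a di Bruno expansion of $\partial^\alpha_{\bold x}\partial^\alpha_{\bold y}k$ for $|\alpha|=m$ involves only $\phi^{(j)}$ with $m\le j\le 2m$, each multiplied by a homogeneous polynomial of degree $2(j-m)$, so finiteness of $E_H[s^{2m}]$ controls every term and gives $\rho_\alpha(\bold 0)-\rho_\alpha(\bold r)=O(\|\bold r\|^2)$, which overwhelmingly dominates the logarithmic modulus required by the continuity criterion. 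The one step I would not let you wave at is the bridge from mean-square differentiability plus a continuous modification of the top-order derivative process to genuine path differentiability: your stated justification --- that a continuous version of the limit ``promotes'' the $L^2$ convergence of difference quotients to almost-sure, locally uniform convergence --- is not a valid inference, since $L^2$ convergence to a continuous limit does not by itself upgrade to almost-sure convergence. The standard argument instead defines $Y(t)=f(\bold x)+\int_0^t \partial_i f(\bold x+s\bold e_i)\,ds$ pathwise for the continuous version of the derivative process, verifies $\E\bigl[(Y(t)-f(\bold x+t\bold e_i))^2\bigr]=0$ from the covariance identities, and iterates over coordinates and derivative orders. Because you invoke this as a known fact from Adler rather than relying on the faulty heuristic, the overall proof stands, but that sentence should be replaced by the integral-representation argument (or a precise citation of it).
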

\begin{proof}
This is essentially Theorem 10 in \cite{paci2003thesis}.
\end{proof}

\citet{paci2003thesis} further uses this result to prove that the exponentiated quadratic and rational quadratic 
kernels are infinitely and Mat\'ern kernels are finitely sample-path differentiable. 
A number of kernels like the exponential and the Mat\'ern-1/2 kernels
do not give rise to differentiable paths and can thus not be used in conjunction with gradient information.
Notably, even the Mat\'ern-3/2 kernel, which has a differentiable mean function, 
does not give rise to differentiable paths.

\section{Explicit Derivation of Gradient Structure of Neural Network Kernel}
\label{sec:nn}
The point of this section is to show an explicit derivation of the gradient structure 
of the neural network kernel,
{\it which is obviated by the structure-deriving AD engine proposed in this work.} 

Another interesting class of kernels are those that arise from analyzing the limit of certain neural network architectures as the number of hidden units tends to infinity.
It is known that a number of neural networks converge to a Gaussian process under such a limit.
For example, if the error function is chosen as the non-linearity for a neural network with one hidden layer,
the kernel of the limiting process has the following form:
\[
k_{NN}(\bold x, \bold y) \eqdef \sin^{-1} \left(
	\frac{\bold x^\top \bold y}{
	\sqrt{n(\bold x) n(\bold y)}
	} \right),
\]
where $n(\bold x) \eqdef 1+\bold x^\top \bold x$.
Formally, this is similar but not equivalent to the inner product kernels discussed above.
The kernel gives rise to the following more complex gradient kernel structure
\[
[\nabla_{\bold x} k](\bold x, \bold y) = \tilde f'(r) \left(\bold y - \frac{r}{n(\bold x)} \bold x \right),
\]
where $\tilde f'(r) \eqdef f'(r) / \sqrt{n(\bold x) n(\bold y)}$, and
\[
[\bold G k](\bold x, \bold y) = 
	\tilde f'(r) \bold I_d +
	\begin{bmatrix} \bold x & \bold y \end{bmatrix}
	\bold A
	 \begin{bmatrix} \bold x & \bold y \end{bmatrix}^\top,
\]
where
\[
\begin{aligned}
\bold A \eqdef 
	\begin{bmatrix} -\frac{g(r)}{n(\bold x)} & \frac{g(r) r}{n(\bold x) n(\bold y)} \\
	\tilde f''(r) & -\frac{g(r)}{n(\bold y)}
	\end{bmatrix}, \qquad
\tilde f''(r) \eqdef \frac{f''(r)}{n(\bold x) n(\bold y)}, \qquad \text{and} \qquad
g(r) \eqdef \left(\tilde f'(r) + \tilde f''(r) r \right).
\end{aligned}
\]
Notably, this is a rank-two correction to the identity, compared to the rank-one corrections for isotropic and dot-product kernels above.

\section{Hessian Structure}
\label{sec:hessian_appendix}

Note that for arbitrary vectors $\bold a, \bold b$, not necessarily of the same length,
$\bold a \otimes \bold b = \text{vec}(\bold b \bold a^\top)$. This will come in handy to simplify certain expressions in the following.

\paragraph{Dot-Product Kernels}
First, note that
\[
\nabla_{\bold y}^\top \text{vec}(\bold y \bold y^\top) = 
\bold I_d \otimes \bold y + \bold y \otimes \bold I_d
\qquad 
\nabla_{\bold y} \nabla_{\bold y}^\top \text{vec}(\bold y \bold y^\top) = 
\bold S_{dd} + \bold I_{d^2}.
\]
Where $\bold S_{dd}$ is a "shuffle" matrix such that 
$\bold S_{dd} \text{vec}(\bold A) = \text{vec}(\bold A^\top)$,
and 
for square matrices $\bold A \in \R^{n\times n}$ and $\bold B \in \R^{m \times m}$, the Kronecker sum is defined as $\bold A \oplus \bold B
\eqdef
\bold A \otimes \bold I_m + \bold I_n \otimes \bold B$.
Then for dot-product kernels, we have
\[
[\bold h_{\bold x}k](\bold x, \bold y) = f''(r) \text{vec}(\bold y \bold y^\top).
\]
\[
[\bold h_{\bold x} \nabla_{\bold y}^\top k](\bold x, \bold y) = f''(r) (\bold I_d \otimes \bold y + \bold y \otimes \bold I_d) 
+ f'''(r) \text{vec}(\bold y \bold y^\top) \bold x^\top.
\]
\[
[\bold h_{\bold y}^\top \bold h_{\bold x}k](\bold x, \bold y) = 
(\bold I_{d^2} + \bold S_{dd}) 
[f''(r) \bold I_{d^2} + f'''(r) (\bold y \bold x^\top \oplus \bold y \bold x^\top)] 
+ f''''(r) \text{vec}(\bold y \bold y^\top) \text{vec}(\bold x \bold x^\top)^\top.
\]

\paragraph{Isotropic Kernels}

Then for isotropic product kernels with $r = \|\bold r\|^2_2$, we have
\[
\bold J_{\bold x} \text{vec}(\bold r \bold r^\top) = 
\bold I_d \otimes \bold r + \bold r \otimes \bold I_d
\qquad
\bold H_\bold y \text{vec}(\bold r \bold r^\top) = 
\bold S_{dd} + \bold I_{d^2}.
\]
Which implies
\[
[\bold h_{\bold x}k](\bold x, \bold y) = f'(r) \text{vec}(\bold I_{d}) + f''(r) \text{vec}(\bold r \bold r^\top).
\]
\[
[\bold h_{\bold x} \nabla_{\bold y}^\top k](\bold x, \bold y) = 
-f''(r) 
(\bold I_d \otimes \bold r + \bold r \otimes \bold I_d) 
- [f''(r) \text{vec}(\bold I_{d}) + f'''(r) \text{vec}(\bold r \bold r^\top)]
\bold r^\top.
\]

\[
\begin{aligned}
\bold h_{\bold y}^\top \bold h_{\bold x}k (\bold x, \bold y) &= 
(\bold I_{d^2} + \bold S_{dd}) [f''(r) \bold I_{d^2} + 
f'''(r) (\bold r \bold r^\top \oplus \bold r \bold r^\top) ] \\
&+ \begin{bmatrix}
\text{vec}(\bold I_{d}) & \text{vec}(\bold r \bold r^\top)
\end{bmatrix}
\begin{bmatrix}
f''(r) & f'''(r)\\
f'''(r) & f''''(r)
\end{bmatrix}
\begin{bmatrix}
\text{vec}(\bold I_{d}) & \text{vec}(\bold r \bold r^\top)
\end{bmatrix}^\top.
\end{aligned}
\]

\paragraph{A Chain Rule}

$k(\bold x, \bold y) = (f \circ g)(\bold x, \bold y)$.
\[
[\bold h_{\bold x}k](\bold x, \bold y) = f'(r) \bold h_{\bold x} [g] + f''(r) \text{vec}(\nabla_{\bold x} g \nabla_{\bold x} g^\top).
\]
\[
[\bold h_{\bold x} \nabla_{\bold y}^\top k](\bold x, \bold y) = 
f''(r) 
( \bold H_{\bold x} g \otimes \nabla_\bold y g + \nabla_\bold y g \otimes \bold H_\bold x g) 
+ [f''(r) \bold h_{\bold x} [g]) + f'''(r) \text{vec}(\nabla_\bold x g \nabla_\bold x g^\top)]
\nabla_\bold y g^\top.
\]

\[
\begin{aligned}
\bold h_{\bold x} \bold h_{\bold y}^\top k (\bold x, \bold y) &= 
(\bold I_{d^2} + \bold S_{dd}) [f''(r) \bold I_{d^2} + 
f'''(r) (\nabla_\bold x g \nabla_\bold x g^\top \oplus \nabla_\bold y g \nabla_\bold y g^\top) ] \\
&+ \begin{bmatrix}
\bold h_{\bold x} g & \text{vec}(\nabla_\bold x g \nabla_\bold x g^\top)
\end{bmatrix}
\begin{bmatrix}
f''(r) & f'''(r)\\
f'''(r) & f''''(r)
\end{bmatrix}
\begin{bmatrix}
\bold h_{\bold y} g & \text{vec}(\nabla_\bold y g \nabla_\bold y g^\top)
\end{bmatrix}^\top.
\end{aligned}
\]

\paragraph{Vertical Scaling}
$k(\bold x, \bold y) = f(\bold x) h(\bold x, \bold y) f(\bold y)$ for a scalar-valued $f$, then

\[
\begin{aligned}
\bold h_{\bold x} k(\bold x, \bold y) =& 
	\bold h_{\bold x} [f(\bold x) h(\bold x, \bold y)] f(\bold y) \\
	=& \ [f(\bold x) \bold h_{\bold x} [h](\bold x, \bold y) \\
	& + \bold h[f](\bold x) h(\bold x, \bold y) \\
	& + \nabla_{\bold x} [h](\bold x, \bold y) \otimes \nabla[f](\bold x) \\
	& + \nabla[f](\bold x) \otimes \nabla_{\bold x}[h] (\bold x, \bold y)] \ f(\bold y) \\
[\bold h_{\bold x} \nabla_{\bold y}^\top k](\bold x, \bold y)
	=& \ [f(\bold x) [\bold h_{\bold x} \nabla_{\bold y}^\top h](\bold x, \bold y) \\
	& +\bold h[f](\bold x) [\nabla_{\bold y}^\top h](\bold x, \bold y)\\
	& + \bold G[h] (\bold x, \bold y) \otimes \nabla[f](\bold x) \\
	& + \nabla[f](\bold x) \otimes \bold G[h](\bold x, \bold y)] \ f(\bold y)\\
	& + \bold h_{\bold x} [f(\bold x) h(\bold x, \bold y)] \nabla_{\bold y}^\top \ f(\bold y)\\
	[\bold h_{\bold x} \bold h_{\bold y}^\top k](\bold x, \bold y) 
	=& \ [f(\bold x) [\bold h_{\bold x} \bold h_{\bold y}^\top h](\bold x, \bold y) \\
	& +\bold h[f](\bold x) [\bold h_{\bold y}^\top h](\bold x, \bold y)\\
	& + \bold G[h] (\bold x, \bold y) \otimes \nabla[f](\bold x) \nabla^\top[f](\bold y) \\
	& + \nabla[f](\bold x) \nabla^\top[f](\bold y) \otimes \bold G[h](\bold x, \bold y)] \ f(\bold y)\\
	& + \bold h_{\bold x} [f(\bold x) h(\bold x, \bold y)] \bold h_{\bold y}^\top \ f(\bold y)\\
\end{aligned}
\]
Again, we observe a structured representation of the Hessian-kernel elements which permit a multiply in $\mathcal{O}(d^2)$ operations.

\paragraph{Warping}
$k(\bold x, \bold y) = h(\bold u(\bold x), \bold u(\bold y))$,

\[
\begin{aligned}
\bold h_{\bold x} k(\bold x, \bold y) 
	&= (\bold J \otimes \bold J)^\top[\bold u](\bold x) \ 
	[\bold h_{\bold x} h](\bold u(\bold x), \bold u(\bold y)) \\
[\bold h_{\bold x} \nabla_{\bold y}^\top k](\bold x, \bold y) 
	&= (\bold J \otimes \bold J)^\top[\bold u](\bold x) \ 
		[\bold h_{\bold x} \nabla_{\bold y}^\top h](\bold u(\bold x), \bold u(\bold y)) \
		\bold J[\bold u](\bold y) \\
[\bold h_{\bold x} \bold h_{\bold y}^\top k](\bold x, \bold y) 
	&= (\bold J \otimes \bold J)^\top[\bold u](\bold x) \
	[\bold h_{\bold x} \bold h_{\bold y}^\top h](\bold u(\bold x), \bold u(\bold y)) \
	(\bold J \otimes \bold J)[\bold u](\bold y).
\end{aligned}
\]

We therefore see that $\bold K^{\bold H} = \bold h_{\bold x} \bold h_{\bold y}^\top k(\bold X) = 
\bold D_{\bold J} [\bold h_{\bold x} \bold h_{\bold y}^\top h](\bold X) \bold D_{\bold J}$,
where $\bold D_{\bold J}$ is the block-diagonal matrix whose $i^\text{th}$ block is equal to $(\bold J \otimes \bold J)[\bold u](\bold x_i) = \bold J[\bold u](\bold x_i) \otimes \bold J[\bold u](\bold x_i)$.
Note that for linearly warped kernels for which $\bold u(\bold x) = \bold U \bold x$,
where $\bold U \in \R^{r \times d}$, 
we have $(\bold J \otimes \bold J)[\bold u](\bold x_i) = \bold U \otimes \bold U$
so that we can multiply with the kernel matrix $\bold K^{\bold H}$ in $\mathcal{O}(n^2r^2 + n(d^2r + r^2d))$. 
The complexity is due to the following property of Kronecker product: 
\[
(\bold U \otimes \bold U) \text{vec}(\bold H) = \text{vec}(\bold U \bold H \bold U^\top),
\]
which can be computed in $\mathcal{O}(d^2r + r^2d)$ 
for every of the $n$ Hessian observations.

\section{Combining Derivative Orders}
\label{sec:combiningorders}
Combining observations of the function values and its first and second derivatives is straightforward via the following block-structured kernel:
\[
\begin{bmatrix}
k& \nabla_{\bold y}[k]^\top & \bold h_{\bold y}[k]^\top \\
 \nabla_{\bold x} [k] & \bold G [k] & \bold J_\bold x [\bold h_{\bold y}[k]] \\
 \bold h_{\bold x}[k] & \bold J_\bold y [\bold h_{\bold x}[k]] & \bold H[k] 
\end{bmatrix}.
\]
If all constituent blocks permit a fast multiply - $\mathcal{O}(d)$ for gradient and $\mathcal{O}(d^2)$ for Hessian-related blocks - the entire structure permits a $\mathcal{O}(d^2)$ multiply,
even though the na\"ive cost is $\mathcal{O}(d^4)$. 
If only value and gradient observations are required, only the top-left two-by-two block is necessary,
which can be carried out in $\mathcal{O}(d)$ in the structured case
and which we implemented as the \href{https://github.com/SebastianAment/CovarianceFunctions.jl/blob/c551902160963980d35e0d17f693f481d0e33d70/src/gradient.jl#L400}{\texttt{ValueGradientKernel}}.

\paragraph{Discussion}
Recall that the computational complexity of multiplying with the gradient and Hessian kernel matrices is $\mathcal{O}(n^2d)$ and $\mathcal{O}(n^2d^2)$, respectively.
Thus, the gradient-based method can only make a factor of $\mathcal{O}(\sqrt{d})$ more observations than the Hessian-based method for the same computational cost.
Therefore, it is computationally easier to incorporate additional information at a single point 
than it is to combine first-order information at several points.
Since the Hessian contains $d$ times more pieces of information than the gradient,
the former could be more efficient in certain scenarios.
We compare the scaling of the multiplications arising from first- and second-order data experimentally in Section \ref{sec:scaling}
but leave a comprehensive comparison of first-order and second-order BO to future work.
The main goal of the current work is to enable such investigations in the first place
by providing the required theoretical advances and practical infrastructure, see \href{https://github.com/SebastianAment/CovarianceFunctions.jl}{\texttt{CovarianceFunctions.jl}}.

\section{Accuracy Comparison to D-SKIP}
 \label{sec:accuracy_dskip}
 
 Figure~\ref{fig:mvm_accuracy_comparisons} shows the relative accuracy of MVMs using D-SKIP and our work for RBF gradient kernel matrices with $n = 1024$.
 Note that the indicated asymptotic lines $\mathcal{O}(\sqrt{d})$ and $\mathcal{O}(d^4)$ are
 there for comparison only and do not indicate a theoretically expected error scaling.
 
 \begin{figure}[t!]
  \centering
\includegraphics[width=.45\textwidth]{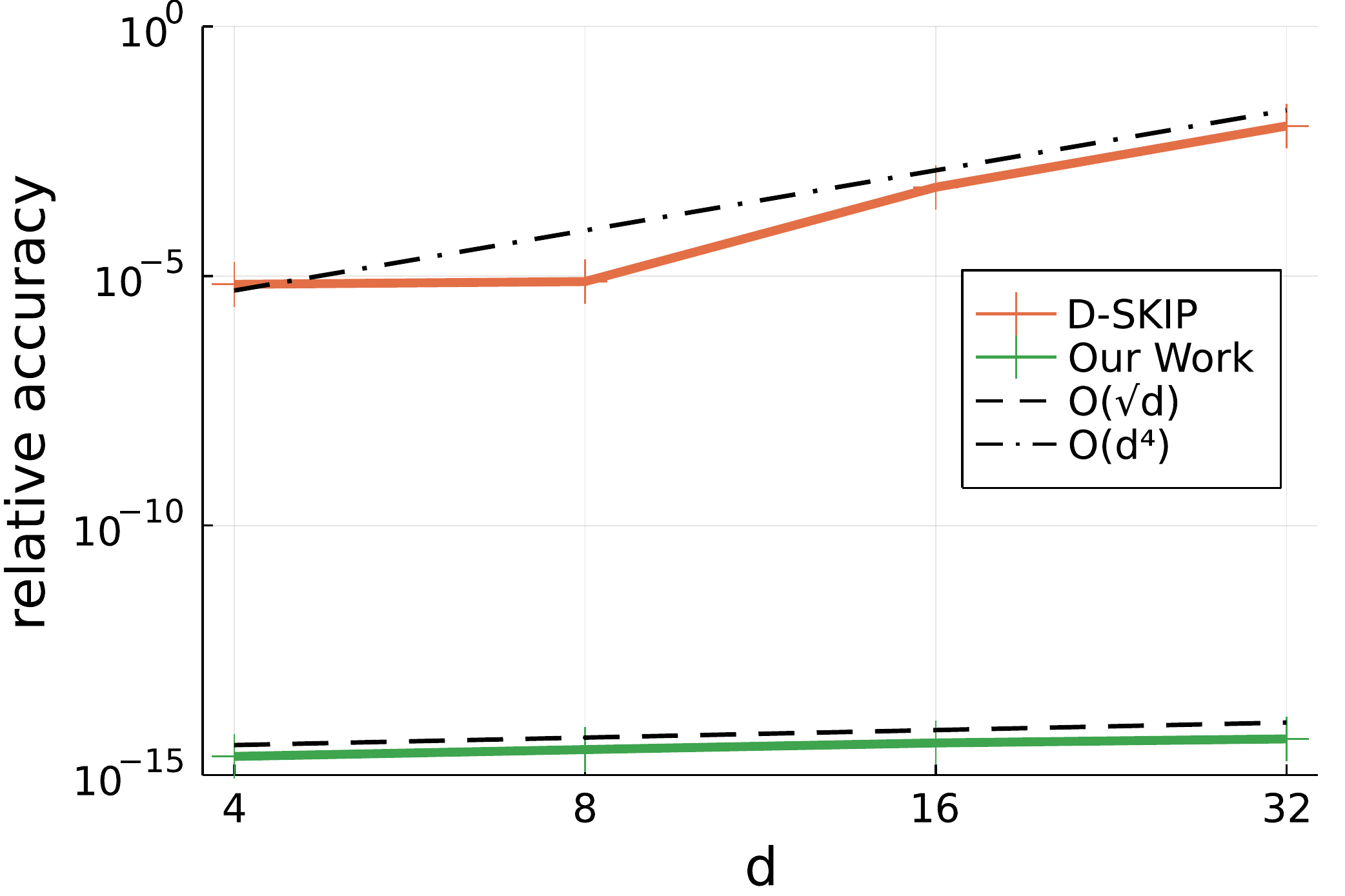}
  \caption{Relative accuracy of MVMs using D-SKIP and our work for RBF gradient kernel matrices with $n = 1024$.}
  \label{fig:mvm_accuracy_comparisons}
\end{figure}

\section{Test Functions for Bayesian Optimization}
\label{sec:test_appendix}

\paragraph{Rosenbrock}
\[
 \text{Rosenbrock}_d(\bold x) = \sum_{i}^{d-1} (x_i - a)^2 + b (x_{i+1} - x_{i}^2)^2 
\]
For our experiments, we let $a = 0$, $b = 10$ and evaluate it on $x_i \in [-3, 3]$.

\paragraph{Ackley} 
 The Ackley function in $d$ dimensions is given by
 \[
 \text{Ackley}_d(\bold x) = 
 \exp(1) + a - a \exp \left( -b \|\bold x\|_2 / \sqrt{d} \right) - \exp \left( \sum_{i=1}^d \cos(cx_i) / d \right),
 \]
where $a = 20, b=0.2$, and $c=2\pi$.
The function is usually evaluated on the hypercube $x_i \in [-32.768, 32.768]$ and has a single global minimum at the origin.
We evaluated it on $[-10, 10]$ for our experiments.

\paragraph{Rastrigin} 
The Rastrigin function is defined by 
\[
\text{Rastrigin}_d(\bold x) = 
10d + \sum_{i=1}^d [x_i^2 - 10\cos(2\pi x_i)],
\]
is usually evaluated on the hypercube $x_i \in [-5.12, 5.12]$ 
and has a single global minimum of $0$ at the origin.

\paragraph{Drop-Wave}
The Drop-Wave function is defined by
\[
\text{DropWave}_d(\bold x) = 
-\frac{1+\cos(12 \|\bold x\|_2)}{\|\bold x\|_2^2 + 2},
\]
is usually evaluated on the two-dimensional square  $x_i \in [-5.12, 5.12]$ and has a single global minimum of $-1$ at the origin. 
We note however, that it is straightforwardly generalized to arbitrary input dimensions, since it only depends on the Euclidean norm of the input.

\paragraph{Griewank}
The Griewank function is defined by
\[
\text{Griewank}_d(\bold x) = \|\bold x\|^2_2 / 4000 - \prod_i \cos \left(\frac{x_i} {\sqrt{i}} \right) + 1
\]
is usually evaluated on the two-dimensional square  $x_i \in [-600, 600]$ and has a single global minimum of $0$ at the origin. 
We note however, that it is straightforwardly generalized to arbitrary input dimensions, since it only dependents on the Euclidean norm of the input.
We evaluated it on $x_i \in [-200, 200]$ in our experiments.

\section{Theoretical Background for Global and Bayesian Optimization}
\label{sec:optimization_theory}
\paragraph{Global Optimization} 
\citet{torn1989global} noted that the global approximation and optimization of continuous functions 
is a hard problem in general.
They proved that, for any algorithm, 
there are multimodal functions that give rise to an arbitrarily large error after a finite number of samples,
and that a global optimization algorithm converges to 
the optimum of any continuous function on a compact set, 
{\it if and only if it eventually samples the set densely}.

If one imposes certain structure on the set of functions,
finite-sample guarantees become feasible.
For example, function whose modulus of continuity $\omega(\delta) = \max_{d(x, y) < \delta} |f(x) - f(y)|$ is bounded, like Lipschitz-continuous functions, admit the following finite-sample bound.
Let $A$ be the compact set on which the function $f$ is to be minimized
and $\{x_i\} \subset A$ be the observed samples, then
\[
\min_{1\leq i \leq n} f(x_i) - \min_{x \in A} f(x) < \omega(d_n),
\]
where $d$ is the metric on $A$ and
$d_n = \max_{x \in A} \min_{1\leq i \leq n} d(x, x_i)$ is the dispersion of the samples.
By extension, this implies that continuously differentiable functions on a compact set admit a similar bound, since the suprema of their derivatives are attained and finite.

\paragraph{Bayesian Optimization} 
Regarding Bayesian optimization algorithms, \citet{torn1989global} noted
that 
``even if [BO] is very attractive theoretically it is too complicated for algorithmic realization. 
Because of the fairly cumbersome computations involving operations with the inverse of the {\it covariance matrix} and complicated auxiliary optimization problems.
The resort has been to use simplified models.''
The techniques put forth in Section~\ref{sec:methods} of this article make significant strides in reducing this complexity.
\citet{bull2011convergence} provides theoretical results for the convergence of 
Bayesian optimization algorithms based on 
Gaussian processes and the Expected Improvement (EI) acquisition function.
The results hold for noiseless observations of the function to be optimized.
\citet{srinivas2012information}
proved regret bounds for the multi-armed bandit problem for which the payoff function is
drawn from a GP with particular kernel functions and the upper-confidence bound acquisition function.
\citet{shekhar2021significance} recently proved that zeroth-order BO has a regret lower bound that increases exponentially with the dimension,
while first-order BO can achieve a much better regret bound of $\mathcal{O}(d \log^2 n)$,
where $d$ is the dimensionality of the input and $n$ is the number of observations,
using a two-stage procedure: the first stage identifies a locally quadratic neighborhood around a presumed optimum, while the second stage takes local gradient steps based on stochastic estimates of the gradient.

\end{document}